\crefname{assum}{Assumption}{Assumptions}
\definecolor{BrightGreen}{RGB}{0,153,51}
\definecolor{Crimson}{RGB}{220,20,60}
\definecolor{TemporalGreen}{HTML}{238B45}     % A darker green for temporal carryover
\definecolor{SpatialBlue}{HTML}{0E67E3}         % Blue for spatial confounding
\definecolor{InterferenceRed}{HTML}{E31A1C}   % Red for interference
\title{GST-UNet: A Neural Framework for Spatiotemporal Causal Inference with Time-Varying Confounding}
\author{%
  Miruna Oprescu\\
  Cornell University\\
  \texttt{amo78@cornell.edu} \\
  \And
  David K Park\\
  Brookhaven National Laboratory\\
  \texttt{dpark1@bnl.gov} \\
  \And
  Xihaier Luo\\
  Brookhaven National Laboratory\\
  \texttt{xluo@bnl.gov}
  \And
  Shinjae Yoo\\
  Brookhaven National Laboratory\\
  \texttt{sjyoo@bnl.gov}
  \And
  Nathan Kallus\\
  Cornell University \& Netflix\\
  \texttt{kallus@cornell.edu} \\
}
\begin{document}

\maketitle

\begin{abstract}
Estimating causal effects from spatiotemporal observational data is essential in public health, environmental science, and policy evaluation, where randomized experiments are often infeasible. 
Existing approaches, however, either rely on strong structural assumptions or fail to handle key challenges such as interference, spatial confounding, temporal carryover, and \textit{time-varying confounding}---where covariates are influenced by past treatments and, in turn, affect future ones. 
We introduce the \textbf{GST-UNet} (\textbf{G}-computation \textbf{S}patio-\textbf{T}emporal \textbf{UNet}), a theoretically grounded neural framework that combines a U-Net-based spatiotemporal encoder with regression-based iterative G-computation to estimate location-specific potential outcomes under complex intervention sequences. 
GST-UNet explicitly adjusts for time-varying confounders and captures non-linear spatial and temporal dependencies, enabling valid causal inference from a \emph{single} observed trajectory in data-scarce settings. 
We validate its effectiveness in synthetic experiments and in a real-world analysis of wildfire smoke exposure and respiratory hospitalizations during the 2018 California Camp Fire. 
Together, these results position GST-UNet as a \textbf{principled and ready-to-use framework} for spatiotemporal causal inference, advancing reliable estimation in policy-relevant and scientific domains.
\end{abstract}

\section{Introduction}

Environmental hazards, public health interventions, and socio-economic policies often require understanding complex cause-and-effect relationships across space and time \citep{reid2016differential, papadogeorgou2019causal, song2020spatial}. 
For instance, evaluating the health impacts of air quality regulations requires assessing how interventions influence both immediate outcomes and downstream effects across regions. Such applications demand robust tools for estimating causal effects from observational spatiotemporal data.

However, causal inference in spatiotemporal settings poses unique challenges. 
Outcomes are influenced not only by local covariates and interventions but also by those of neighboring regions (spatial confounding and interference). 
Effects may persist and accumulate over time (temporal carryover), and covariates often evolve in response to past interventions while simultaneously affecting future ones (time-varying confounding). 
For example, air quality regulations are often implemented in reaction to recent pollution levels and hospitalizations, which themselves shape future exposures and health outcomes---creating feedback loops that violate standard independence assumptions. 
These complexities induce bias in naive estimators and are especially challenging in single-trajectory settings, where replication across units or time is infeasible.

Existing approaches offer limited solutions: classical methods rely on rigid structural assumptions or user-defined exposure mappings, while recent neural models emphasize predictive accuracy over causal identification. 
Many assume independent time series or model only spatial correlations, leaving a gap in methods that can jointly address interference, temporal dependencies, and evolving confounding within a principled causal framework (see \cref{sec:lit-review}).

To bridge this gap, we introduce \textbf{GST-UNet} (\textbf{G}-computation \textbf{S}patio-\textbf{T}emporal \textbf{UNet}), a theoretically grounded neural framework for estimating location-specific potential outcomes in spatiotemporal settings with time-varying confounding. 
GST-UNet builds on formal identification and consistency results derived under a representation-based time-invariance assumption, showing how causal effects can be recovered from a single observed trajectory. 
We then instantiate this theory in a practical neural architecture: a U-Net encoder with ConvLSTM and attention modules coupled to an iterative G-computation procedure that performs recursive causal adjustment over time. 
To ensure stable estimation over long horizons, we design a curriculum-based training strategy that gradually refines recursive pseudo-outcomes, enabling effective learning even in data-scarce regimes.  
Unlike existing approaches, GST-UNet requires no user-specified structural models and can be directly deployed in real-world spatiotemporal applications.

Our contributions are threefold:
(1) We develop the first unified framework that couples theoretical identification and consistency guarantees with an end-to-end neural implementation for spatiotemporal causal inference;
(2) We demonstrate through controlled simulations that GST-UNet robustly handles interference, temporal carryover, and time-varying confounding; and
(3) We illustrate its practical value via a real-world analysis of wildfire smoke exposure and respiratory hospitalizations during the 2018 California Camp Fire.

In summary, GST-UNet provides a \textbf{principled and ready-to-use framework} for causal inference from spatiotemporal data, combining formal guarantees with a flexible neural implementation. By abstracting away model-specific assumptions, GST-UNet makes spatiotemporal causal estimation both \textbf{accessible and reliable} for applied scientific and policy domains.

\vspace{-0.2em}
\section{Related Work}\label{sec:lit-review}

We summarize the most relevant prior work here, with a more detailed discussion in \cref{app:extended-lit}.

\textbf{Classical Spatiotemporal Causal Inference.} Early approaches (e.g., spatial econometrics \citep{anselin2013spatial}, difference-in-differences \citep{keele2015geographic}, synthetic controls \citep{ben2022synthetic}) rely on strong assumptions such as parallel trends and no interference. More recent methods incorporate time-varying confounding using inverse propensity weighting (IPW) and marginal structural models \citep{papadogeorgou2022causal, zhou2024estimating}, but cannot address interference unless via user-specified exposure mappings or hyper-local assumptions \citep{wang2021causal, christiansen2022toward, zhang2023spatiotemporal}. As noted by \citet{zhou2024estimating}, the literature remains sparse, particularly in settings with rich feedback dynamics.

\textbf{Machine Learning for Spatiotemporal Modeling.} Deep learning models for prediction--e.g., CNNs and RNNs \citep{shi2015convolutional, zhang2017deep}, graph-based methods \citep{li2017diffusion, wu2019graph}, and video transformers \citep{bertasius2021space, liu2022video}--capture complex spatial-temporal patterns but do not incorporate causal adjustments, and thus cannot estimate counterfactuals or adjust for time-varying confounders.

\textbf{Time Series Causal Inference.} Causal methods for longitudinal data include marginal structural models \citep{robins2000marginal}, iterative G-computation \citep{robins2008estimation}, and recent ML-based extensions using recurrent networks, transformers, or meta-learners \citep{bica2020estimating, seedat2022continuous, melnychuk2022causal, li2021g, frauen2024model, hess2024g}. However, these assume access to independent time series (\eg across patients) and cannot model cross-unit interactions in spatiotemporal settings.
%spatial interference or a single spatiotemporal realization.

\textbf{Neural-Based Spatiotemporal Causal Inference.} \citet{tec2023weather2vec} propose a UNet-based model that adjusts for non-local spatial confounding but focuses on static exposures and does not address interference or time-varying effects. Most similar to our work, \citep{ali2024estimating} presents a climate-focused model that shares certain architectural similarities but emphasizes prediction rather than causal adjustment, leaving causal identification under time-varying confounding largely unaddressed. 

\textbf{Positioning of Our Work.} 
Our work bridges these threads by uniting a theoretically grounded G-computation framework with a neural architecture for spatiotemporal data. 
Unlike prior time-series methods that assume independent units or spatial models that overlook confounding feedback, GST-UNet is the first end-to-end approach that (i) establishes identification and consistency under explicit assumptions for a \emph{single} spatiotemporal trajectory, and (ii) implements this theory in a practical neural model capable of handling interference, spatial confounding, and time-varying dynamics.

\section{Problem Formulation}\label{sec:setup}

\begin{figure*}[t]
    \centering
    \includegraphics[width=\textwidth]{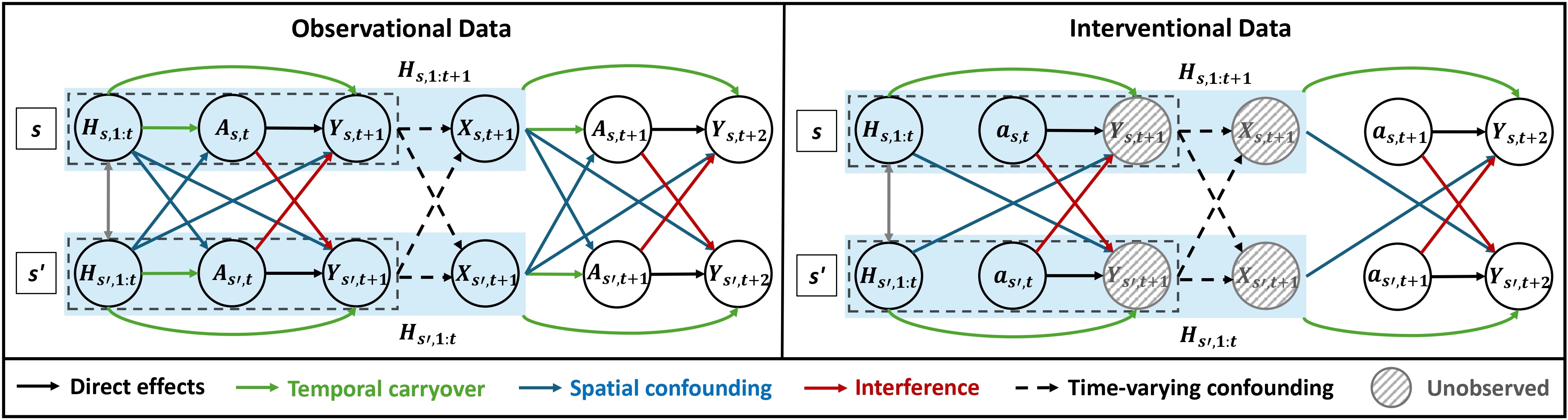}
    \vspace{-1.3em}
    \caption{Observational data (left) versus interventional data (right) for a horizon $\tau=2$ across multiple locations $(s,s')$. 
    %Green arrows indicate temporal carryover, blue arrows show spatial confounding, and red arrows depict interference; dashed arrows denote time-varying confounding, and dashed circles represent unobserved variables at inference time. 
    Under the intervention (right), treatments are set independently of confounders, and the full history is not observed for the entire horizon.}
    \vspace{-1em}
    \label{fig:st-data}
\end{figure*}

\textbf{Spatiotemporal Data.} We model observed data as random variables on a discrete spatial domain represented by an $N_X \times N_Y$ lattice: $\mathcal{S} = \{(i, j) \mid i \in [N_X], j \in [N_Y] \},$ where $[N] = \{1, \dots, N\}$ denotes the index set. Time is indexed by $t \in [T]$. At each spatial location $s = (i, j)$ at time $t$, we observe a tuple $(\Xb_{s,t}, A_{s,t}, Y_{s,t})$, where $A_{s,t} \in \{0, 1\}$ represents a binary treatment (or intervention), $Y_{s,t} \in \RR$ is a continuous outcome of interest, and $\Xb_{s,t} \in \RR^{d_X}$ is a vector of time-varying covariates (\eg local weather conditions, pollution levels, or socioeconomic indicators). Additionally, each location $s$ is associated with static features $V_s\in \RR^{d_v}$ (\eg geographical characteristics and socioeconomic indicators). While we focus on binary interventions for clarity, the methods generalize to more complex treatments. Conceptually, the each variable forms a 3D spatiotemporal tensor of size $T \times N_X \times N_Y$, though in practice, observations may be incomplete. Missing data can be accommodated using masking techniques during downstream modeling.

To streamline notation, we use boldface symbols for random variables defined over the entire spatial domain. For $U \in \{X, A, Y\}$, let $\mathbf{U}_t$ denote its value at time $t$, and let $\mathbf{U}_{t:t+\tau} = (\mathbf{U}_t, \dots, \mathbf{U}_{t+\tau})$ denote its value over a time interval. For a specific location $s$, we write $U_{s, t:t+\tau} = (U_{s, t}, \dots, U_{s, t+\tau})$. The history up to time $t$ is denoted by $\Hb_{1:t} = (\Xb_{1:t}, \Ab_{1:t-1}, \Yb_{1:t}, \Vb)$ for the entire spatial domain and $H_{s, 1:t} = (X_{s, 1:t}, A_{s, 1:t-1}, Y_{s, 1:t}, V_s)$ for a specific location $s$. Specific instantiations of these random variables are denoted using lowercase letters (e.g., $u \in \{x, a, y, h\}$).

\textbf{Quantities of Interest.} Our primary goal is to estimate location-specific Conditional Average Potential Outcomes (CAPOs) for a sequence of future spatiotemporal interventions, conditioned on observed history. Our approach builds on Rubin's potential outcomes framework \citep{rubin1978bayesian, robins2008estimation, robins2000marginal}, which we extend to accommodate spatiotemporal settings. More concretely, we consider a future time horizon of length $\tau \geq 1$ and a predetermined interventional sequence $\ab_{t:t+\tau-1}$ applied across the spatial domain starting at time $t$. Our goal is to estimate the potential outcomes at time $t+\tau$, denoted as $\mathbf{Y}_{t+\tau}[\mathbf{a}_{t:t+\tau-1}]$. In particular, we aim to compute:
\begin{equation}\label{eq:capo}
    \EE[\Yb_{t+\tau}[\ab_{t:t+\tau-1}]\mid \Hb_{1:t}=\hb_{1:t}]
\end{equation}
which represents the CAPOs at time $t+\tau$ under the given treatment sequence. Given two different interventional sequences $\ab_{t:t+\tau}$ and $\ab'_{t:t+\tau}$, a related secondary goal is to estimate the location specific Conditional Average Treatment Effect (CATE), given by:
\begin{equation*}
    \EE[\Yb_{t+\tau}[\ab_{t:t+\tau-1}]-\Yb_{t+\tau}[\ab'_{t:t+\tau-1}]\mid \Hb_{1:t}=\hb_{1:t}]
\end{equation*}
Although we focus primarily on CAPOs, CATEs and other effect measures can be derived similarly.

\textbf{Prefix Data in a Single Spatiotemporal Chain.}
The conditional expectations defining the CAPOs in Eq.~\eqref{eq:capo} cannot be directly estimated from a single observed spatiotemporal realization, since the empirical averages would contain only one sample of each future outcome $\Yb_{t+\tau}[\ab_{t:t+\tau-1}]$. To obtain a workable regression-based estimator, we therefore reorganize the single observed trajectory into overlapping "prefixes" of varying lengths. For each $t\in\{1,\dots,T-\tau\}$, we define
\[
\Pb_t^\tau
\;=\;
\bigl(\Xb_{1:t+\tau},\Ab_{1:t+\tau},\Yb_{1:t+\tau},\Vb\bigr),
\]
which represents the observed history up to time $t+\tau$ along with all covariates, treatments, and outcomes. When $T\!\gg\!\tau$, this construction yields $T-\tau$ segments that partially overlap in time, providing additional training samples in this intrinsically data-scarce, single-chain setting. 

However, these prefixes are \emph{not} independent: successive segments share overlapping histories, so standard i.i.d.\ assumptions do not apply. In the next section, we introduce conditions under which these prefixes can be treated as \emph{conditionally exchangeable} given an appropriate learned embedding. This enables regression-based estimation of CAPOs by pooling information across time without violating the dependence structure of the original process.

\section{Identification and Estimation of CAPOs in Spatiotemporal Settings}
\label{sec:theory}

Identification of CAPOs from observational data relies on standard causal inference assumptions. In our setting, these must be complemented by additional structure to handle the fact that we observe only a \emph{single} spatiotemporal trajectory. 
Building on the prefix construction introduced above, we impose conditions that render these overlapping segments \emph{conditionally exchangeable}, enabling principled pooling of information across time.

\begin{assum}[Causal Inference Assumptions]\label{assump:standard} 
We assume: \textit{(Consistency)} $\Yb_{t+\tau}=\Yb_{t+\tau}[\ab_{t:t+\tau-1}]$ whenever the observed sequence of treatments $\Ab_{t:t+\tau-1}$ satisfies $\Ab_{t:t+\tau-1}=\ab_{t:t+\tau-1}$; \;(\textit{Positivity}) $P(A_{s, t}=a_{s,t} \mid \Hb_{1:t}=\hb_{1:t})>0$ for any $a_{s,t}\in\{0, 1\}$ and feasible realization of history $\hb_{1:t}$; \;(\textit{Sequential Unconfoundedness}) $\Yb_{t+1:T}[\ab_{t+1:T}]\perp \Ab_t\mid \Hb_{1:t}$, $ \forall \ab_{t+1:T}\in\{0,1\}^{T-t}$, \ie at each time step $t$, the treatment assignment is independent of future potential outcomes.
\end{assum}

\begin{assum}[Representation-Based Time Invariance]\label{assump:embedding} There exists a function (or embedding) $\phi:\Hcal \times \Acal \rightarrow \Zcal\subseteq \RR^h$ that maps $(\Hb_{1:t}, \Ab_t)$ to a finite-dimensional representation such that once we condition on $z=\phi(\Hb_{1:t}, \Ab_t)$, the distribution $(\Xb_{t+1}, \Yb_{t+1})$ does not explicitly depend on $t$. Formally, for any $t,t'\in \{1 ,\dots,T\}$ and $z\in \Zcal$, we have: 
 \begin{align*}
    p(\Xb_{t+1}, \Yb_{t+1}\mid \phi(\Hb_{1:t}, \Ab_t)=z) = p(\Xb_{t'+1}, \Yb_{t'+1}\mid \phi(\Hb_{1:t'}, \Ab_{t'})=z).
\end{align*}
\end{assum}

\cref{assump:standard} is a standard set of requirements in longitudinal causal inference settings (e.g., \citep{robins2000marginal, robins2008estimation, bica2020estimating, li2021g, melnychuk2022causal, hess2024g}). \cref{assump:embedding} is specific to the single-time series setting, where pooling information across time is essential to enable estimation. We note that the single time-series setting frequently arises in causal inference, where assumptions such as stationarity or strict time homogeneity enable consistent estimation \citep{bojinov2019time, papadogeorgou2022causal, zhou2024estimating}. In contrast, our representation-based time invariance is \emph{weaker}: rather than requiring $\Xb_{t}, \Yb_{t}$ themselves to have a time-invariant distribution, we only assume that, once the history is summarized by $\phi(\Hb_{1:t}, \Ab_{t})$, the transition to $(\Xb_{t+1}, \Yb_{t+1})$ follows a single shared mechanism. This approach aligns with modern time-series causal inference that learn time-invariant latent embeddings to pool information across time steps \citep{lim2018forecasting, li2021g, hess2024g}, thus leveraging more data for a single, stable representation rather than time-dependent parameters.

Under \cref{assump:embedding}, conditioning on $\phi(\Hb_{1:t}, \Ab_t)$ removes explicit dependence on $t$, such that
\[
\EE_\Pb[\Yb_{t+\tau}\mid \phi(\Hb_{1:t}, \Ab_t)]
\]
represents a shared conditional expectation across all prefix segments. In this view, $t$ indexes the segment’s position rather than a distinct distribution. Pooling over $t$ thus yields $T-\tau$ approximately exchangeable segments from a single trajectory, enabling regression-based estimation of future outcomes from embedded histories.

\subsection{Identification via Representation-Based G-Computation}
\label{sec:identification}
Given $\Pb_t^\tau$, we next show how to identify CAPOs from observational data. 
For horizons $\tau \ge 2$, \emph{future} covariates and outcomes (\ie $\Xb_{t+1:t+\tau-1}, \Yb_{t+1:t+\tau-1}$) can influence subsequent treatments, inducing time-varying confounding \citep{coston2020counterfactual}. 
Such feedback violates standard "condition-on-history" adjustments and leads to biased estimates. 
\Cref{fig:st-data} illustrates these dependencies by contrasting observational data (left) and hypothetical interventions (right) for $\tau=2$. 
By contrast, when $\tau=1$, conditioning on $\Hb_{1:t}$ is sufficient under standard assumptions, as no future confounders intervene between $\Ab_t$ and $\Yb_{t+1}$. 
Formally, the following naive identification fails to hold for $\tau>1$:
\begin{align}\label{eq:naive}
    &\EE[\Yb_{t+\tau}[\ab_{t:t+\tau-1}]\mid \Hb_{1:t}=\hb_{1:t}] \neq \EE[\Yb_{t+\tau}\mid \Hb_{1:t}=\hb_{1:t}, \Ab_{t:t+\tau-1}=\ab_{t:t+\tau-1}]
\end{align}
To correct this bias, we adapt \emph{regression-based iterative G-computation} \citep{bang2005doubly, robins2008estimation} to the spatiotemporal setting, yielding a principled adjustment procedure for evolving confounders and valid CAPO estimation. 
We formalize this connection in the following result:

\begin{theorem}[Identification with G-Computation] \label{thm:identification} 
    Assume that \cref{assump:standard} and \cref{assump:embedding} hold. 
    Further, let $\Hb^\ab_{1:t+k}:=(\Xb_{1:t+k}, [\Ab_{1:t-1}, \ab_{t:t+k-1}], \Yb_{1:t+k})$ denote the history where observed treatments from time $t$ onward are replaced by $\ab_{t:t+k-1}$. Define recursively:
    \begin{align*}
        & Q_\tau (\Hb_{1:t+\tau-1}, \Ab_{t+\tau-1}) = \EE_\Pb[\Yb_{t+\tau}\mid \phi(\Hb_{1:t+\tau-1}, \Ab_{t+\tau-1})]\\
         & Q_{\tau-1} (\Hb_{1:t+\tau-2}, \Ab_{t+\tau-2}) = \EE_\Pb[Q_\tau(\Hb^\ab_{1:t+\tau-1}, \ab_{t+\tau-1})\mid \phi(\Hb_{1:t+\tau-2}, \Ab_{t+\tau-2})]\\
        & \dots\\
        & Q_1(\Hb_{1:t}, \Ab_{t}) = \EE_\Pb[Q_2(\Hb^\ab_{1:t+1}, \ab_{t+1})\mid \phi(\Hb_{1:t}, \Ab_{t})]
    \end{align*}
    Then $ \EE[\Yb_{t+\tau}[\ab_{t:t+\tau-1}]\mid \Hb_{1:t}=\hb_{1:t}] = Q_1(\hb_{1:t}, \ab_t)$.
\end{theorem}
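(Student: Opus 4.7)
The plan is to prove the identity by backward induction on the iteration index $k=\tau,\tau-1,\ldots,1$. For each $k$, I would introduce the auxiliary target
\[
\widetilde Q_k(\hb^\ab_{1:t+k-1}, a_{t+k-1})\;:=\;\EE_\Pb\!\left[\Yb_{t+\tau}[\ab_{t:t+\tau-1}]\,\middle|\,\Hb_{1:t+k-1}=\hb^\ab_{1:t+k-1},\,\Ab_{t+k-1}=a_{t+k-1}\right],
\]
which lives on the raw history rather than on the embedding. The goal of the induction is to show that $\widetilde Q_k$ coincides with $Q_k$ on the relevant support, and then recover the CAPO from the case $k=1$ using sequential unconfoundedness at time $t$: since $\Yb_{t+\tau}[\ab]\perp\Ab_t\mid\Hb_{1:t}$, conditioning additionally on $\Ab_t=a_t$ leaves $\EE[\Yb_{t+\tau}[\ab]\mid\Hb_{1:t}=\hb_{1:t}]$ unchanged, and that quantity is precisely $\widetilde Q_1(\hb_{1:t}, a_t)$.

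The base case $k=\tau$ is immediate from consistency: on the conditioning event, the observed treatment history already satisfies $\Ab_{t:t+\tau-1}=\ab_{t:t+\tau-1}$, so $\Yb_{t+\tau}[\ab]=\Yb_{t+\tau}$ and $\widetilde Q_\tau$ reduces to the standard outcome regression $\EE_\Pb[\Yb_{t+\tau}\mid\Hb_{1:t+\tau-1},\Ab_{t+\tau-1}]$. For the inductive step, I would use the tower property to write
\[
\widetilde Q_k \;=\; \EE_\Pb\!\left[\EE_\Pb\!\left[\Yb_{t+\tau}[\ab]\,\middle|\,\Hb_{1:t+k},\,\Ab_{t+k}\right]\,\middle|\,\Hb_{1:t+k-1},\,\Ab_{t+k-1}\right],
\]
invoke sequential unconfoundedness at time $t+k$ (i.e., $\Yb_{t+\tau}[\ab]\perp\Ab_{t+k}\mid\Hb_{1:t+k}$) to fix $\Ab_{t+k}=a_{t+k}$ in the inner conditioning without altering its value, and recognize the resulting inner expression as $\widetilde Q_{k+1}(\Hb^\ab_{1:t+k},a_{t+k})$, which by the inductive hypothesis equals $Q_{k+1}(\Hb^\ab_{1:t+k},a_{t+k})$. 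Positivity (\cref{assump:standard}) ensures the nested conditional expectations are well defined on the supports encountered, since each conditioning set has positive probability.

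The main obstacle is bridging the raw-history conditional expectations that arise naturally from this induction with the embedded-history ones appearing in the theorem's definition of $Q_k$. This is precisely where \cref{assump:embedding} enters: it posits that the conditional law $p(\Xb_{t+k},\Yb_{t+k}\mid\phi(\Hb_{1:t+k-1},\Ab_{t+k-1}))$ is shared across time, and I would read this as implicitly asserting that $\phi$ is a \emph{sufficient statistic} for that law, so that $\EE_\Pb[\,\cdot\mid\Hb_{1:t+k-1},\Ab_{t+k-1}]$ and $\EE_\Pb[\,\cdot\mid\phi(\Hb_{1:t+k-1},\Ab_{t+k-1})]$ agree on the integrands appearing at each layer of the recursion, namely functions of $(\Xb_{t+k},\Yb_{t+k})$. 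Once this sufficiency is made precise, the induction carries through verbatim with the compact $\phi$-based form of $Q_k$ in place of the raw-history form, yielding the desired identification.
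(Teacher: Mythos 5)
Your proposal is correct and follows essentially the same route as the paper's proof: the paper likewise alternates sequential unconfoundedness (with positivity) and the tower property to build the nested conditional expectations, applies consistency at the innermost layer, and then invokes \cref{assump:embedding} to replace raw-history conditioning with $\phi$-conditioning at every level — you have merely repackaged this unrolling as an explicit backward induction with the auxiliary $\widetilde{Q}_k$. Your observation that \cref{assump:embedding} must be read as a sufficiency statement for $\phi$ is apt, and the paper's Step~2 relies on exactly the same reading.
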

We provide a proof of \cref{thm:identification} in \cref{app:identification-proof}. This result naturally motivates a recursive regression approach for spatiotemporal CAPO estimation, fitting each $Q_k(\cdot)$ in reverse order and substituting interventional treatments where required.

\subsection{Estimation via Iterative G-Computation}
\label{sec:iterative-g}

While \cref{thm:identification} motivates a recursive regression algorithm for each $Q_k$ ($k=1,\dots,\tau$), only $Q_\tau$ can be directly estimated from the prefix data.  At the next step, $Q_{\tau-1}$ depends on $Q_\tau\bigl(\Hb^\ab_{1:t+\tau-1}, \ab_{t+\tau-1}\bigr)$---where the observed treatments $\Ab_{t:t+\tau-1}$ are replaced by $\ab_{t:t+\tau-1}$---but such substituted outcomes are not observed in the prefix data.  Therefore, for $k<\tau$, we propose a procedure where we generate \emph{pseudo-outcomes} by predicting with the previously learned $\widehat{Q}_{k+1}$.  Going forward, we use $\widehat{F}$ to denote any quantity $F$ estimated from data. Formally, let $\phi\in\Phi$ be an embedding satisfying \cref{assump:embedding}, and let $\Qcal$ be our function class for $Q_k$.  We learn the sequence $\widehat{Q}_\tau,\dots,\widehat{Q}_1$ from prefix data 
$\{\Pb_t^\tau : t=1,\dots,T-\tau\}$, via: %the following steps:
\setlist{nolistsep}
\begin{enumerate}[leftmargin=*, labelsep=0.5em]
    \item \textbf{Initialization.}
    Fit $\widehat{Q}_\tau$ to predict $\Yb_{t+\tau}$ from the prefix embedding $\phi(\Hb_{1:t+\tau-1},\Ab_{t+\tau-1})$.
    \item \textbf{Backward recursion.} 
    For $k=\tau-1,\dots,1$:
    \begin{enumerate}
        \item \textit{Substitute interventions.} 
        For each prefix $\Pb_t^\tau$, replace $\Ab_{t+k}$ by the interventional $\ab_{t+k}$ to form the modified history $\Hb^\ab_{1:t+k}$.
        \item \textit{Generate pseudo-outcomes.} 
        Let 
        $\widetilde{Y}_{t+k+1} = \widehat{Q}_{k+1}\bigl(\Hb^\ab_{1:t+k}, \ab_{t+k}\bigr)$, 
        where $\widehat{Q}_{k+1}$ was learned in the previous step.  
        These $\widetilde{Y}_{t+k+1}$ act as surrogates for $\Yb_{t+k+1}$ in the prefix data.
        \item \textit{Fit $\widehat{Q}_k$.} 
        Regress $\widetilde{Y}_{t+k+1}$ on the current embedding $\phi\bigl(\Hb_{1:t+k-1},\,\Ab_{t+k-1}\bigr)$ to learn $\widehat{Q}_k \in \Qcal$.
    \end{enumerate}
    \item \textbf{Final step.} 
    Given a new history $\hb_{1:t}$ and an interventional path $\ab_{t:t+\tau-1}$, we predict \begin{align*}
         \widehat{\EE}_\Pb[\Yb_{t+\tau}[\ab_{t:t+\tau-1}]\mid \phi(\Hb_{1:t}, \ab_t)=\phi(\hb_{1:t}, \ab_t)]= \widehat{Q}_1\bigl(\hb_{1:t},\,\ab_t\bigr).
    \end{align*}
\end{enumerate}
The iterative regression procedure yields consistent CAPO estimates provided each stage $Q_k$ is estimated consistently from data \citep{laan2003unified}. 
Informally, if the learned embedding $\widehat{\phi}$ converges to the true time-invariant representation $\phi$, and small perturbations in $\phi$ or $\widehat{Q}_k$ lead to proportionally small changes in predictions, then the overall recursive estimator remains consistent. 
These regularity conditions---formalized through uniform stochastic equicontinuity---are detailed in \cref{app:consistency-proof}. Formally, we state the following theorem:

\begin{theorem}[Consistency of Iterative G-Computation in Spatiotemporal Settings]\label{thm:consistency-main}
Assume \cref{assump:standard,assump:embedding} and that (a) the learned embedding $\widehat{\phi}$ is $L_2$-consistent for $\phi$, and (b) each regression head $\widehat{Q}_k$ consistently estimates $Q_k$ and is uniformly well-behaved\footnote{We formalize "well-behaved" via uniform stochastic equicontinuity and continuity in \cref{app:consistency-proof}.} on $\mathrm{Im}\,\phi$ (intuitively, small input perturbations induce small output changes).
Let $\Zb_k := (\Hb_{1:t+k}, \Ab_{t+k})$ denote the history–action pair at step $k$. 
Then
\[
\bigl\|\widehat{Q}_1(\Zb_0;\widehat{\phi}) - Q_1(\Zb_0;\phi)\bigr\|_2 = o_p(1),
\]
so the recursive estimator $\widehat{Q}_1$ of the CAPO is probabilistically consistent.
\end{theorem}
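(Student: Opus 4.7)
The plan is to prove \cref{thm:consistency-main} by backward induction on $k = \tau, \tau-1, \dots, 1$, establishing $\|\widehat{Q}_k(\Zb_{k-1};\widehat{\phi}) - Q_k(\Zb_{k-1};\phi)\|_2 = o_p(1)$ at each step and reading off the conclusion at $k = 1$. The base case $k=\tau$ is nearly immediate: $\widehat{Q}_\tau$ is fit on the genuinely observed outcomes $\Yb_{t+\tau}$, so hypothesis (b) gives $\|\widehat{Q}_\tau(\cdot;\widehat{\phi}) - Q_\tau(\cdot;\widehat{\phi})\|_2 = o_p(1)$, and a further triangle-inequality step combined with $L_2$-consistency of $\widehat{\phi}$ and continuity of $Q_\tau$ (from (a) and (b)) removes the embedding mismatch.

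For the inductive step, I fix $k < \tau$ and assume the claim for $k+1$. The key move is to interpose two population objects between $\widehat{Q}_k(\cdot;\widehat{\phi})$ and $Q_k(\cdot;\phi)$: let $\widetilde{Q}_k(\cdot;\widehat{\phi})$ denote the true conditional expectation of the data-based pseudo-outcome $\widetilde{Y}_{t+k+1} = \widehat{Q}_{k+1}(\Hb^\ab_{1:t+k}, \ab_{t+k}; \widehat{\phi})$ given $\widehat{\phi}(\Zb_{k-1})$, and let $Q_k^\bullet(\cdot;\widehat{\phi})$ denote the same conditional expectation but with the true $Q_{k+1}(\cdot;\phi)$ in place of $\widehat{Q}_{k+1}(\cdot;\widehat{\phi})$ inside. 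Writing
\begin{equation*}
\widehat{Q}_k(\cdot;\widehat{\phi}) - Q_k(\cdot;\phi) = \bigl[\widehat{Q}_k - \widetilde{Q}_k\bigr] + \bigl[\widetilde{Q}_k - Q_k^\bullet\bigr] + \bigl[Q_k^\bullet(\cdot;\widehat{\phi}) - Q_k(\cdot;\phi)\bigr],
\end{equation*}
I control each term separately: the first is sample regression error, $o_p(1)$ by (b); the second is the conditional expectation of $\widehat{Q}_{k+1} - Q_{k+1}$, bounded in $L_2$ by $\|\widehat{Q}_{k+1} - Q_{k+1}\|_2$ via the contractivity of conditional expectation, and therefore $o_p(1)$ by the inductive hypothesis; and the third is the embedding-perturbation term, controlled by combining $L_2$-consistency of $\widehat{\phi}$ with the uniform continuity/equicontinuity of $Q_k$ on $\mathrm{Im}\,\phi$.

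The main obstacle I expect is the first term together with the subtle fact that the pseudo-outcomes at stage $k$ are themselves random functions of $\widehat{Q}_{k+1}$, which was trained on the same data. This coupling between regression targets and covariates breaks the textbook i.i.d.\ regression argument, and it is precisely the uniform stochastic equicontinuity on $\Qcal$ (formalized in \cref{app:consistency-proof}) that prevents pathological amplification: small $L_2$ perturbations of either the input embedding or the pseudo-outcome signal induce only $o_p(1)$ perturbations of the fitted $\widehat{Q}_k$, uniformly over the range of $\phi$. A secondary complication is that the overlapping prefixes $\Pb_t^\tau$ are not independent across $t$; this is where \cref{assump:embedding} earns its keep, since conditioning on the embedding renders the prefix segments conditionally exchangeable and justifies pooled regression. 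Iterating the inductive bound then yields the stated conclusion at $k = 1$.
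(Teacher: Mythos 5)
Your proposal is correct and follows the same overall skeleton as the paper's proof (backward induction from $k=\tau$ to $k=1$, triangle-inequality decomposition, equicontinuity to absorb the embedding mismatch), but your decomposition in the inductive step is genuinely finer than the paper's. The paper uses only two terms per step: $\|\widehat{Q}_k(\cdot;\widehat{\phi}) - \widehat{Q}_k(\cdot;\phi)\|_2$, killed by $L_2$-consistency of $\widehat{\phi}$ plus stochastic equicontinuity, and $\|\widehat{Q}_k(\cdot;\phi) - Q_k(\cdot;\phi)\|_2$, which is killed by directly \emph{assuming} it is $o_p(1)$ for every $k$ (condition (ii) of \cref{thm:appendix-consistency}). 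As a consequence, the paper's inductive hypothesis is stated but never actually invoked in the inductive step --- the propagation of error from the estimated $\widehat{Q}_{k+1}$ into the pseudo-outcomes that $\widehat{Q}_k$ is regressed on is folded into the hypothesis rather than derived. Your three-term split, which interposes the population regression onto the $\widehat{Q}_{k+1}$-generated pseudo-outcomes ($\widetilde{Q}_k$) and its analogue with the true $Q_{k+1}$ ($Q_k^\bullet$), makes that propagation explicit: the middle term is bounded by $\|\widehat{Q}_{k+1} - Q_{k+1}\|_2$ via the $L_2$-contractivity of conditional expectation and is where the inductive hypothesis does real work. What your approach buys is a weaker and more honest stage-wise assumption --- you only need each $\widehat{Q}_k$ to consistently estimate the conditional mean of \emph{whatever targets it is actually trained on}, rather than assuming convergence to the oracle $Q_k$ outright; what it costs is the extra care you correctly flag about the pseudo-outcomes being random functions trained on the same data, a dependence the paper's coarser decomposition sidesteps by assumption. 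Both arguments reach the same conclusion, and yours would be a strictly more informative write-up of the appendix proof.
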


We provide a proof of \cref{thm:consistency-main} in \cref{app:consistency-proof}. In the following section, we instantiate this procedure in our \textbf{GST-UNet} architecture, illustrating how to incorporate spatial dependencies and interference into $\phi$ and each $Q_k$, and implement a streamlined, end-to-end training strategy that unifies history embeddings and outcome predictions.

\section{GST-UNet Implementation} \label{sec:implementation}

\begin{figure*}[t]
    \centering
    \includegraphics[width=\textwidth]{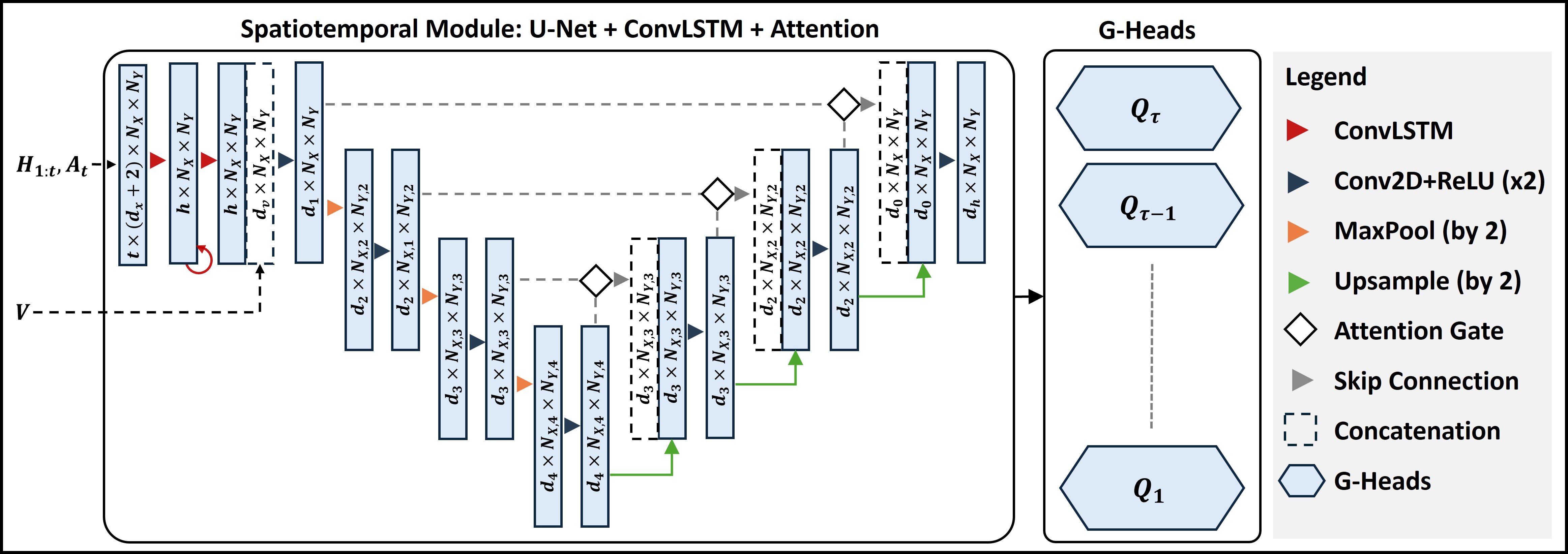}
    \vspace{-1.5em}
    \caption{Overview of the GST-UNet architecture. The spatiotemporal learning module (left) is a U-Net augmented with a ConvLSTM layer and attention gates. Its final feature map is passed to a set of \emph{G-heads} (right), where each G-head $Q_k$  implements iterative G-computation (see \cref{alg:gstunet}). 
    %to predict the potential outcomes or pseudo-outcomes at step $k$.
    }
    \vspace{-1em}
    \label{fig:gstunet-overview}
\end{figure*}

The theoretical results above establish how CAPOs can be identified and consistently estimated from a single spatiotemporal trajectory. We now provide a concrete neural implementation of this procedure. 
\textbf{GST-UNet} instantiates the iterative G-computation framework with a spatiotemporal deep architecture that embeds strong inductive biases---locality, translation invariance, and temporal smoothness---well suited to data-scarce settings. While alternative backbones could be employed, our U-Net with ConvLSTM and attention offers a natural choice for learning stable, history-invariant representations that satisfy \cref{assump:embedding}. We now describe the architecture and the training procedure that realizes the GST-UNet (\cref{alg:gstunet}). 

\subsection{Model Architecture}\label{sec:gstunet}

The \textbf{GST-UNet} consists of two main components: 
\setlist{nolistsep}
\begin{enumerate}[leftmargin=1.5em,itemsep=0.2em,topsep=0.2em]
    \item \textbf{Spatiotemporal Learning Module:} a U-Net-based network augmented with ConvLSTM and attention gates for spatiotemporal processing.
    \item \textbf{Neural Causal Module:} $\tau$ G-computation heads, each mapping the spatiotemporal features to the final outcome predictions in the iterative procedure.
\end{enumerate}
 We illustrate the GST-UNet architecture in \cref{fig:gstunet-overview} and describe its main components below.

\textbf{Spatiotemporal Learning Module.}
\textit{(1) Spatial Module.} While our framework is agnostic to the choice of spatiotemporal learning module, we adopt a U-Net with ConvLSTM and attention due to its strong performance in data-scarce regimes. To efficiently process high-dimensional spatial data, we employ U-Net~\citep{ronneberger2015u}, a fully convolutional architecture originally developed for biomedical image segmentation. It employs an encoder-decoder design with skip connections: the encoder progressively downsamples the spatial grid through convolution and pooling, while the decoder upsamples it back to the original resolution, merging encoder features at each scale. 
\textit{(2) Temporal Module.} U-Net has limitations in capturing temporal information. To address this, we integrate a Convolutional Long Short-Term Memory (ConvLSTM) layer \citep{shi2015convolutional} to the U-Net encoder. This module captures temporal dependencies by maintaining a hidden state across time steps while aggregating spatial information through convolutions. After computing the final ConvLSTM state, we append static (time-invariant) covariates $\Vb$ as additional feature channels, ensuring the subsequent U-Net encoder-decoder has direct access to both temporal dynamics and static location-specific information. In the decoder, we incorporate \emph{attention gates} \citep{oktay2018attention} to selectively highlight relevant spatial regions, refining skip connections and emphasizing critical global or local patterns. The embedding module ultimately produces a $d_h$-dimensional feature map of size $N_X \times N_Y$, capturing essential spatiotemporal context---including interference, spatial confounding, and static covariates---for downstream G-computation. 

\textbf{Neural Causal Module.}
We attach $\tau$ \emph{G-computation heads} to the U-Net’s final feature maps, corresponding to the $Q_k$ estimators in the iterative procedure (see \cref{sec:iterative-g}). Each head can be a small convolutional module or a simple feed-forward network, depending on how much spatial structure remains to be captured. The information flow at the G-computation heads proceeds as follows: each head $Q_k$ ($k=1,\dots,\tau$) receives the $d_h \times N_X \times N_Y$ U-Net embedding $\widehat{\phi}\bigl(\Hb_{1:t+k-1}, \Ab_{t+k-1}\bigr)$ (encompassing spatiotemporal and static context) and outputs an $N_X \times N_Y$ prediction for that time step. We refer to this as the \emph{supervision step}, since $Q_\tau$ compares its predictions to the \emph{real} observed outcomes $\Yb_{t+\tau}$, anchoring the model in genuine data, while each $Q_{k<\tau}$ compares its predictions to pseudo-outcomes $\widetilde{\Yb}_{t+k+1}$ provided by $\widehat{Q}_{k+1}$. These pseudo-outcomes arise in a subsequent \emph{generation step}, wherein $Q_{k+1}$ processes the intervened history $\widehat{\phi}\bigl(\Hb^\ab_{1:t+k}, \ab_{t+k}\bigr)$ in a \emph{detached} forward pass (so $\widehat{Q}_{k+1}$ is not updated by $Q_k$’s loss), thereby creating surrogate targets for $Q_k$. This procedure realizes the iterative G-computation logic from \cref{sec:iterative-g}, enabling GST-UNet to estimate future outcomes under various counterfactual treatments. By separating the spatiotemporal embedding from the G-heads, we maintain a common representation for all prefix data (see \cref{assump:embedding}) and flexibly capture interference and spatial confounding. Each G-head enforces the proper temporal adjustments to yield bias-free counterfactual inference. 

\subsection{Training and Inference}\label{sec:gstunet-train}

\begin{algorithm}[tb]
   \caption{GST-UNet Training and Inference}
   \label{alg:gstunet}
\begin{algorithmic}[1]
   \STATE \textbf{Input:} Horizon $\tau$, prefixes $\{\Pb_t^\tau\}_{t=1}^{T-\tau}$, interventions $\ab_{t:t+\tau-1}$, curriculum $\alpha_k^{(e)}$, total epochs $E$.
   \STATE \textbf{Initialize:} parameters $\theta$ (U-Net embedding + G-heads).
   \FOR{$e = 1 \ldots E$}
       \FOR{$k=\tau \ldots 1$}
           \STATE \textbf{(Supervision)} For each prefix $i$, predict outcomes $\widehat{Y}_{t+k}^{(i)} = Q_k\bigl(\phi(\Hb_{1:t+k-1}^{(i)}, \Ab_{t+k-1}^{(i)});\theta\bigr)$. 
           \STATE \textbf{(Generation (detached))} For each prefix $i$, generate pseudo-outcomes:
           \vspace{-5pt}
           \begin{align*}
               & \widetilde{Y}_{t+k}^{(i)}= \begin{cases}
                Q_k\Bigl(\phi\bigl((\Hb^\ab_{1:t+k-1})^{(i)},\,\ab_{t+k-1}^{(i)}\bigr);\theta\Bigr), & k < \tau,\\
                Y_{t+\tau}^{(i)}, & k=\tau.
                \end{cases}
           \end{align*}
           where the observed $\Ab_{t:t+k-2}$'s were replaced with $\ab_{t:t+k-2}$ in the history.
        \ENDFOR
           \STATE \textbf{(Loss aggregation)} Compute the MSE loss $\Lcal(\theta;e) = \frac{1}{\tau} \sum_{k=1}^\tau \alpha_k^{(e)} \sum_{i}
             \bigl(
                \widehat{Y}_{t+k}^{(i)}
               - 
                \widetilde{Y}_{t+k+1}^{(i)}
             \bigr)^2$.
           \STATE \textbf{(Backward pass)} Update $\theta$ by backpropagation.
   \ENDFOR
   \STATE \textbf{(Inference)} Given a $\hb_{1:t}$, return
   $Q_1(\phi(\hb_{1:t},\ab_t);\widehat{\theta})$.
\end{algorithmic}
\end{algorithm}

While each G-head $Q_k$ could be trained sequentially--from $Q_\tau$ down to $Q_1$--by passing pseudo-outcomes backward through time, this creates a conflict when all heads share the same U-Net embedding $\phi$. Specifically, each $Q_k$ may push $\phi$ toward optimizing its own objective, resulting in misaligned training signals and unstable learning.

\textbf{Joint Loss and Multi-Task Training.}
To address this issue, we employ a \emph{joint} (or \emph{multi-task}) training approach \citep{caruana1997multitask, evgeniou2004regularized} by aggregating the loss terms from all G-heads into a single objective, then backpropagating once per batch.  Concretely, for each head $Q_k$, let $\widetilde{Y}_{t+k+1}$ be the \emph{real} outcomes if $k=\tau$ or \emph{pseudo-outcomes} (generated by $\widehat{Q}_{k+1}$) if $k<\tau$.  Our head-specific loss is a mean squared error (MSE) over all prefix samples:  
\begin{equation*}
\Lcal_k(\theta)
\,=\,
\sum_{i=1}^{T-\tau}
\left[
  Q_k\!\bigl(\phi(\Hb_{1:t+k-1}^{(i)}, \Ab_{t+k-1}^{(i)});\theta\bigr)
  -
  \widetilde{Y}_{t+k+1}^{(i)}
\right]^2,
\end{equation*}
where $\theta$ encompasses \emph{all} model parameters (the shared U-Net embedding $\phi$ and the G-heads $Q_k$).  

Let $\alpha_k^{(e)}$ denote a \emph{head-weight} for epoch $e$.  We then form the overall training objective at epoch $e$ by
\begin{equation}\label{eq:multitask-obj}
    \Lcal(\theta; e) 
    \;=\;
    \frac{1}{\tau}\sum_{k=1}^\tau 
    \alpha_k^{(e)} \;\Lcal_k(\theta).
\end{equation}
By summing the losses and performing a single backward pass, we learn a common embedding $\widehat{\phi}$ that balances the needs of all G-heads, rather than fitting each head separately.

\textbf{Curriculum Training.} A naive implementation of Eq.~\eqref{eq:multitask-obj}--where each G-head is given equal weight--can be suboptimal: early in training, $Q_\tau$ (which sees real data) is inaccurate, and the pseudo-outcomes generated for $Q_{k<\tau}$ are effectively noise. Consequently, $Q_{1}, \dots, Q_{\tau-1}$ may overfit to poor targets before $Q_\tau$ has converged, leading to suboptimal solutions. To mitigate this, we employ a \emph{curriculum} training approach \citep{bengio2009curriculum}, gradually increasing the loss weight of earlier heads as $Q_\tau$ improves.

While many curricula are possible, we adopt a simple scheme controlled by a single hyperparameter $e_c$ (the ``curriculum period'') so we can readily tune it.  Let $p(e) = \min\{\tau,\;\lceil e / e_c\rceil\}$, which indexes a ``phase'' based on the current epoch $e$.  We then define
\begin{align*}
\alpha_k^{(e)}
\,=\,
\begin{cases}
1 / p(e),
 & \text{if } k \in \{\tau,\,\tau-1,\,\dots,\tau - p(e) +1\},\\
0,
 & \text{otherwise}.
\end{cases}
\end{align*} Hence, during epochs $1 \le e \le e_c$ (phase $p(e)=1$), only $Q_\tau$ is active with $\alpha_\tau^{(e)}=1$; in the next interval $e_c < e \le 2e_c$ (phase $p(e)=2$), $Q_\tau$ and $Q_{\tau-1}$ each have weight $1/2$, and so on until all heads are active with uniform weight $1/\tau$. For $e > \tau e_c$, training continues with $\alpha_k^{(e)}=1/\tau$ for all heads. This schedule ensures $Q_\tau$ becomes reasonably accurate before earlier heads rely on its pseudo-outcomes. The hyperparameter $e_c$ controls the pacing, helping prevent early training noise.

We also adopt standard neural network practices, including mini-batch optimization and early stopping, to stabilize training and mitigate overfitting. At \textit{inference} time, given a new history $\hb_{1:t}$ and an interventional sequence $\ab_{t:t+\tau-1}$, we compute $\widehat{Q}_1(\phi(\hb_{1:t},\ab_t);\theta)$ as our target CAPO estimate. We sketch the overall training and inference procedure in \cref{alg:gstunet}.

\section{Experiments}\label{sec:experiments}

We evaluate the proposed GST-UNet framework through two applications. First, we simulate synthetic data that incorporates key spatiotemporal causal inference challenges: interference, spatial confounding, temporal carryover, and time-varying confounding. Using this synthetic data generation process (DGP), we compare the GST-UNet algorithm against several baselines. Next, we demonstrate the utility of GST-UNet on a real-world dataset analyzing the impact of wildfire smoke on respiratory hospitalizations during the 2018 California Camp Fire.

Additional details--including exact simulation parameters, model architecture and execution setups, hyperparameter selection strategies, and validation procedures--can be found in \cref{sec:app-experiments}.
Replication code is available at \url{https://github.com/moprescu/GSTUNet}.

\subsection{Synthetic Data}\label{sec:exp-sim}

We generate $T=200$ time steps of a $64 \times 64$ ($N_X \times N_Y$) grid of observational data using the following data generating process (DGP):
\begin{align*}
\Xb_{t} &= \alpha_0 + \alpha_1 \Xb_{t-1} 
+ \alpha_2 \Ab_{t-1} 
+ \alpha_3 (K_X * \Xb_{t-1}) 
+ \epsilon_X,\\
\Ab_{t} &\sim \mathrm{Bern}\!\Bigl(\sigma\!\bigl(\beta_1\!\bigl(\beta_0 
+ \textcolor{TemporalGreen}{\tfrac{1}{L}\!\sum_{l=0}^{L-1}}\,\textcolor{SpatialBlue}{K_A * \Xb_{t-l}}\bigr)\bigr)\Bigr),\\
\Yb_{t} &= \gamma_0 + \gamma_1 \bigl(\textcolor{InterferenceRed}{K_{YA} * \Ab_{t-1}}\bigr) 
+ \gamma_2 \textcolor{TemporalGreen}{\tfrac{1}{L}\!\sum_{l=1}^L}\,\bigl(\textcolor{SpatialBlue}{K_{YX} * \Xb_{t-l}}\bigr) + \gamma_3 \textcolor{TemporalGreen}{\Yb_{t-1}} 
+ \epsilon_Y,
\end{align*}
where $d_X=1$, "$*$" denotes a $3{\times}3$ spatial convolution over the $N_X \times N_Y$ grid, and $\epsilon_X, \epsilon_Y \sim \mathcal{N}(0,1)$ are i.i.d.\ noise. 
Each kernel $K_X, K_A, K_{YA}, K_{YX}$ encodes a local advection–diffusion process that mimics wind-driven pollutant transport, with interventions $\Ab_t$ injecting additional emissions that propagate through the same kernel. 
This physically realistic setup produces \textcolor{InterferenceRed}{\textbf{interference}}, \textcolor{SpatialBlue}{\textbf{spatial confounding}}, and \textcolor{TemporalGreen}{\textbf{temporal carryover}}—the three challenges GST-UNet is designed to address. 
Each equation is evaluated at every spatial location, so $\Xb_t$, $\Ab_t$, and $\Yb_t$ are $N_X \times N_Y$ matrices. 
Here, $\Xb_t$ acts as a \emph{time-varying confounder}: its past influences both $\Ab_t$ and $\Yb_t$, while current interventions $\Ab_t$ affect future $\Xb_{t+1}$. 
For example, $\Ab_t$ may represent regulatory actions, $\Xb_t$ air quality, and $\Yb_t$ health outcomes---capturing feedback from policy to exposure to outcome, and back to future policy.

\begin{table*}[t]
\centering
\caption{RMSE $\pm$ SD across test trajectories. Bold indicates lowest error per column; color shows improvement (RMSE \textcolor{BrightGreen}{\textbf{decrease}} or \textcolor{Crimson}{\textbf{increase}}) over best baseline (excluding ablations).}
\small
\begin{tabular}{|l|l|ccccc|}
\toprule
$\tau$ & Model & $\beta_1=0.0$ & $\beta_1=0.5$ & $\beta_1=1.0$ & $\beta_1=1.5$ & $\beta_1=2.0$\\
\midrule
\multirow{7}{*}{5}
 & UNet+ & \textbf{0.28 ± 0.00} & 0.36 ± 0.00 & 0.54 ± 0.01 & 0.71 ± 0.01 & 0.81 ± 0.01 \\
 & STCINet & 0.29 ± 0.00 & 0.38 ± 0.01 & 0.62 ± 0.01 & 0.80 ± 0.01 & 0.90 ± 0.01 \\
 & IPWUNet & 0.60 ± 0.01 & 0.58 ± 0.01 & 0.58 ± 0.01 & 0.59 ± 0.01 & 0.59 ± 0.01 \\
 & GST-UNet w/o Attention & 0.50 ± 0.00 & 0.46 ± 0.00 & 0.51 ± 0.00 & 0.45 ± 0.01 & 0.47 ± 0.01 \\
 & GST-UNet w/o Curriculum & 0.69 ± 0.00 & 0.64 ± 0.00 & 0.63 ± 0.00 & 0.61 ± 0.01 & 0.61 ± 0.01\\
 & \textbf{GST-UNet} & 0.33 ± 0.00 & \textbf{0.35 ± 0.00} & \textbf{0.40 ± 0.00} & \textbf{0.44 ± 0.00} & \textbf{0.40 ± 0.01}\\
 &  & (\textcolor{Crimson}{\textbf{+17.9\%}}) & (\textcolor{BrightGreen}{\textbf{-2.7\%}}) & (\textcolor{BrightGreen}{\textbf{-21.6\%}}) & (\textcolor{BrightGreen}{\textbf{-25.4\%}}) & (\textcolor{BrightGreen}{\textbf{-32.2\%}}) \\
\midrule
\multirow{7}{*}{10}
 & UNet+ & \textbf{0.28 ± 0.00} & 0.61 ± 0.00 & 1.18 ± 0.00 & 1.45 ± 0.00 & 1.71 ± 0.01\\
 & STCINet & 0.31 ± 0.00 & 0.68 ± 0.00 & 1.25 ± 0.00 & 1.47 ± 0.01 & 1.60 ± 0.01 \\
 & IPWUNet & 0.78 ± 0.01 & 0.80 ± 0.01 & 0.96 ± 0.01 & 1.19 ± 0.02 & 1.08 ± 0.01 \\
 & GST-UNet  w/o Attention & 0.42 ± 0.00 & 0.60 ± 0.00 & 0.61 ± 0.00 & 0.79 ± 0.01 & 1.07 ± 0.01 \\
 & GST-UNet w/o Curriculum & 0.62 ± 0.00 & 0.88 ± 0.00 & 1.02 ± 0.00 & 1.08 ± 0.01 & 1.12 ± 0.01 \\
 & \textbf{GST-UNet} & 0.38 ± 0.00 & \textbf{0.55 ± 0.00} & \textbf{0.68 ± 0.00} & \textbf{0.73 ± 0.01} & \textbf{0.85 ± 0.01}\\
 &  & (\textcolor{Crimson}{\textbf{+35.7\%}}) & (\textcolor{BrightGreen}{\textbf{-9.8\%}}) & (\textcolor{BrightGreen}{\textbf{-29.2\%}}) & (\textcolor{BrightGreen}{\textbf{-38.7\%}}) & (\textcolor{BrightGreen}{\textbf{-21.3\%}}) \\
\bottomrule
\end{tabular}
\label{tab:sim-results}
\end{table*}

We vary $\beta_1$ to control time-varying confounding: when $\beta_1 = 0$, $\Xb_t$ does not affect $\Ab_t$, eliminating confounding; larger values increase its strength. 
For each $\beta_1$, we generate 50 test trajectories from random initial states, fix their histories, and simulate 100 $\tau$-step counterfactual futures to estimate true CAPOs, with $\tau \in \{5, 10\}$. 
We compare GST-UNet against three baselines: 
(i) \textbf{UNet+}, which uses a U-Net + ConvLSTM + Attention backbone with $A_t$ as an input channel but performs no iterative adjustment; 
(ii) \textbf{STCINet}~\citep{ali2024estimating}, which estimates direct and indirect effects without modeling time-varying confounding; and 
(iii) \textbf{IPWUNet}, an inverse-propensity-weighting variant that reweights pseudo-outcomes using a UNet-style propensity estimator but cannot correct for spatial interference (details in \cref{sec:app-experiments}). 
We also test ablations of GST-UNet without curriculum or attention. 
\Cref{tab:sim-results} shows that when $\beta_1 = 0$, UNet+ performs best—G-computation is unnecessary and adds noise. 
As $\beta_1$ increases, UNet+ and STCINet degrade sharply, while GST-UNet remains stable. 
IPWUNet shows some benefit but is biased even at $\beta_1 = 0$ due to uncorrected interference. 
GST-UNet consistently outperforms all baselines, demonstrating the value of iterative G-computation. 
Curriculum training substantially improves performance across horizons, while attention yields modest gains—consistent with our predominantly local dynamics. 
Additional ablation analyses, including neighbor aggregation experiments, are reported in \cref{sec:app-experiments}.

\subsection{Impact of Wildfires on Respiratory Health}\label{sec:exp-wildfire}

Wildfire smoke has been linked to short-term respiratory harms~\citep{reid2016differential, reid2016critical, cascio2018wildland, cleland2021estimating, letellier2025applying}, with older adults especially vulnerable~\citep{deflorio2019cardiopulmonary}. 
At the time this work was conducted (January~2025), a series of 14 destructive wildfires affected the Los Angeles metropolitan area and San Diego County in California, underscoring the urgency of understanding the health impacts of such events. In this study, we focus on a previous large-scale episode: the 2018 California wildfire season~\citep{wiki2018}, which included the \emph{Carr Fire} (July–August) and the \emph{Camp Fire} (November) and significantly worsened air quality.

We use daily, county-level data from \citet{letellier2025applying} (see \cref{sec:app-wildfire}), including PM$_{2.5}$, respiratory/cardiovascular hospitalizations, and weather variables (temperature, precipitation, humidity, radiation, wind), along with population estimates from the California Department of Finance. Each of the weather variables can be a \emph{time-varying confounder}: weather conditions affect future smoke levels and health outcomes, while also being influenced by prior smoke levels.

We focus on weeks 20--48 (May 18--Dec 2, 2018), covering the Carr and Camp fires. Following standard practice, we label a county as ``treated'' on days with mean PM$_{2.5}>10 \mu g{/}m^3$ and use raw hospitalization counts (rather than per-10{,}000 incidence, which can be unstable for small counties). We interpolate daily county-level data (treatment, outcome, five covariates) onto a $40\times44$ latitude--longitude grid, discarding cells outside California, yielding a spatiotemporal tensor of size $203\times7\times40\times44$. Interpolation ensures each grid cell approximates the region it overlaps (area-weighted), enabling the model to capture spatial gradients in PM$_{2.5}$, weather, and hospitalizations. We train GST-UNet with horizon $\tau=10$, using the Carr Fire period (June–July) for validation, and generate counterfactual predictions for the Camp Fire peak, November~8--17. See \cref{sec:app-wildfire} for preprocessing and masking details.

\begin{figure*}[t]
    \centering
    \includegraphics[width=0.3765\textwidth]{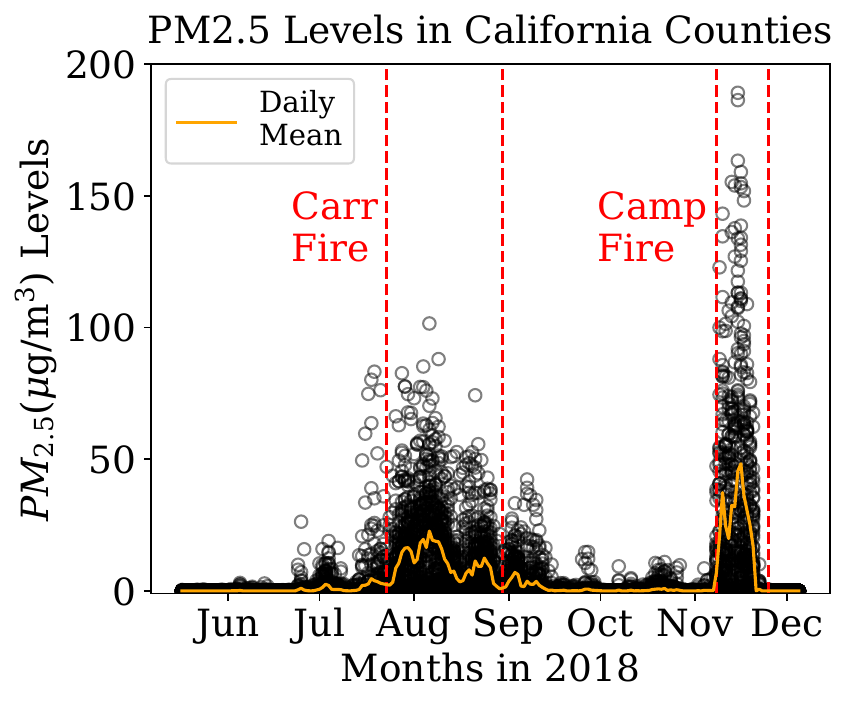}%
    \includegraphics[width=0.28\textwidth]{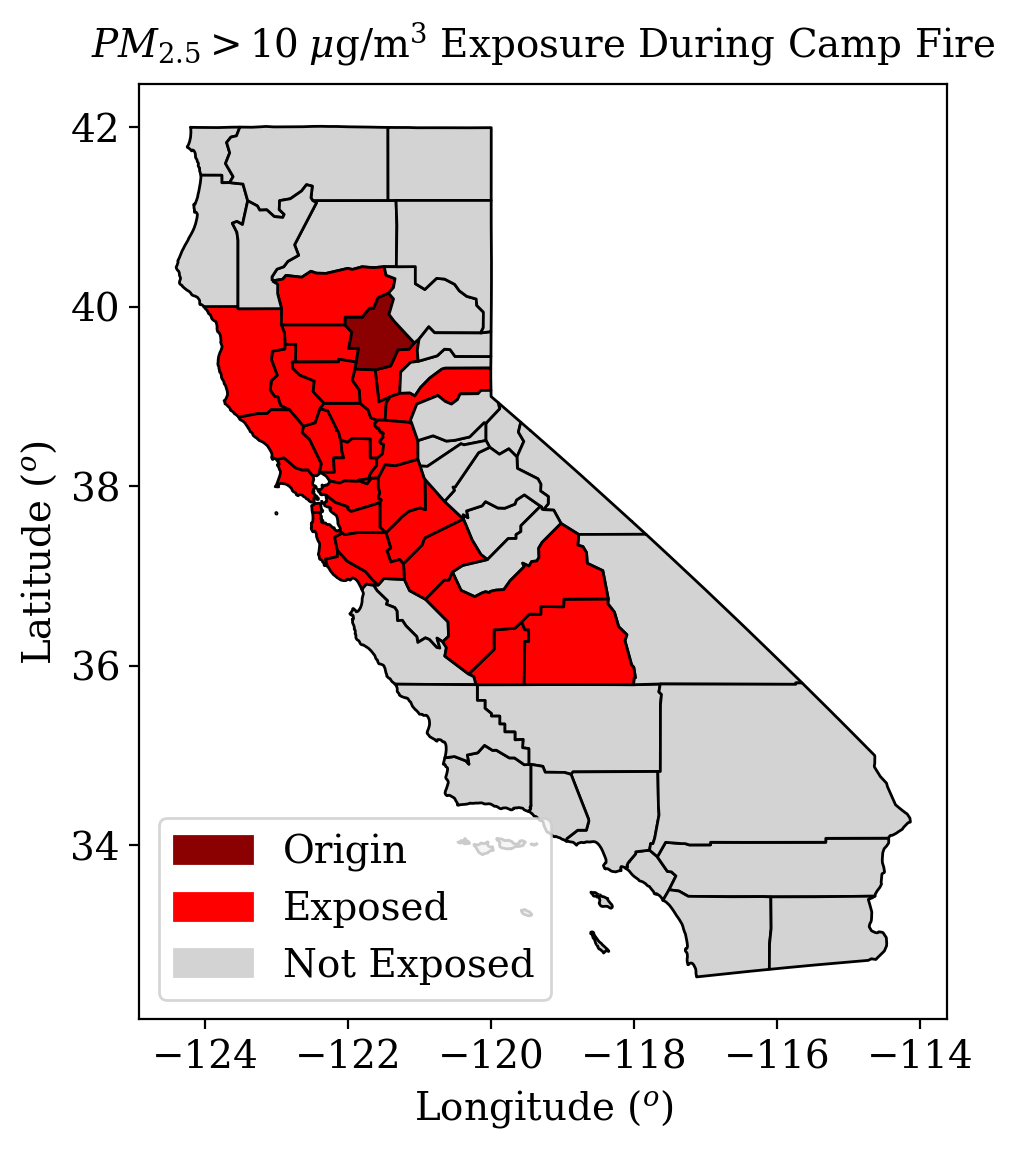}%
    \includegraphics[width=0.342\textwidth]{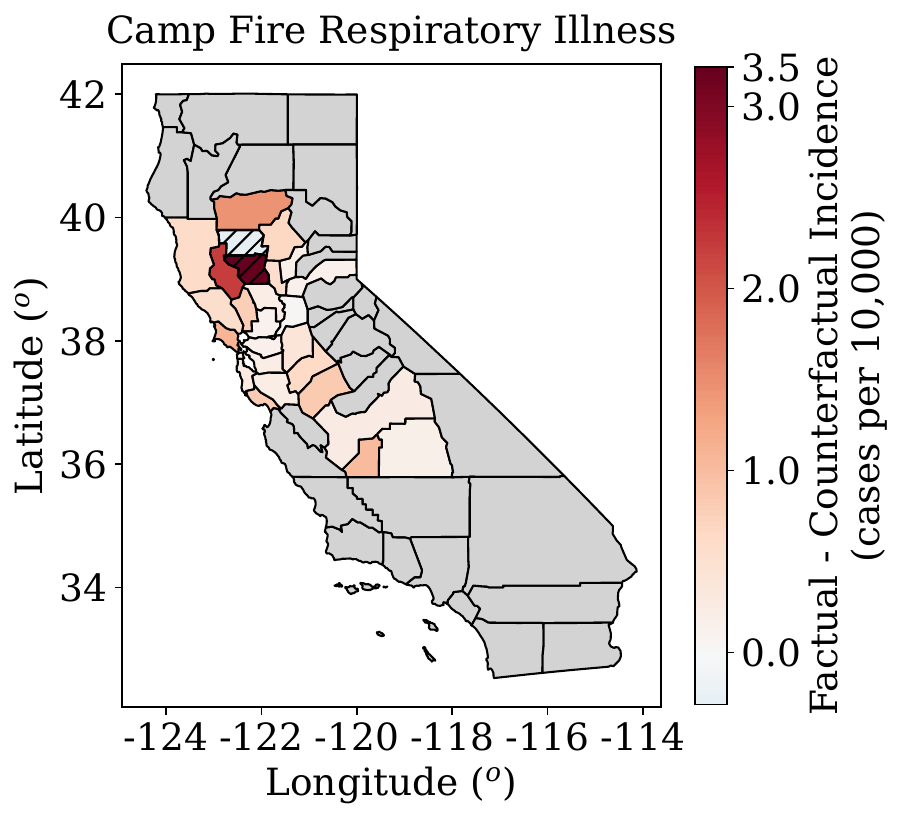}
    %\vspace{-2em}
    \caption{%
    \textbf{(Left)} Daily PM\textsubscript{2.5} levels across California from May to December~2018, 
    with red lines marking major wildfires. 
    \textbf{(Center)} Counties exposed to average PM\textsubscript{2.5}~$>$~10\,\textmu g/m\textsuperscript{3} 
    during the Camp Fire (red), origin county in dark red. 
    \textbf{(Right)} Factual minus CAPO‐predicted daily respiratory admissions during peak Camp Fire. 
    Hashed areas indicate small-population counties ($<30{,}000$).
    }
    \label{fig:camp_fire_maps}
    \vspace{-1em}
\end{figure*}

Figure~\ref{fig:camp_fire_maps} (left) shows the rise in PM${2.5}$ during the mid-late 2018 wildfire season; (center) highlights counties with daily PM${2.5}>10,\mu g{/}m^3$. Using GST-UNet, we estimate daily CAPOs had the Camp Fire not occurred (i.e., setting PM$_{2.5}\le10,\mu g{/}m^3$ statewide). Figure~\ref{fig:camp_fire_maps} (right) compares these to factual daily incidence (hospitalizations per $10{,}000$ residents). To reduce small-sample variability, we exclude counties with population below 30,000 (vs.\ $>$70,000 for others), marking them with hatching (see \cref{sec:app-wildfire}).
Over November8--17, GST-UNet predicts \textbf{approximately 4{,}650 excess respiratory hospitalizations} (465/day) attributable to the Camp Fire, with the highest incidence near the fire source. This aligns with a 95\% bootstrap confidence interval of $[1888, 6535]$. UNet+ yields a lower mean and higher uncertainty (3,981; $[-899, 5202]$), STCINet produces highly variable near-zero estimates (88; $[-3077, 3281]$), and IPWUNet gives implausibly high, near-constant values ($\sim$20{,}500), reflecting limitations of weighting under rare-event support. These results underscore GST-UNet's improved stability and accuracy in counterfactual estimation. Our findings are qualitatively consistent with \citet{letellier2025applying}, who report 259 excess daily cases averaged over a longer, lower-intensity window (Nov 8-Dec 5). Overall, the GST-UNet captures spatiotemporal variation in smoke exposure and health outcomes, illustrating its promise for real-world causal inference in domains such as environmental health and policy.

\section{Conclusion}

We presented \textbf{GST-UNet}, a neural framework for spatiotemporal causal inference that combines U-Net–based representation learning with iterative G-computation to adjust for time-varying confounders. GST-UNet addresses key challenges such as interference, spatial confounding, temporal carryover, and time-varying feedback. We establish theoretical identification and consistency guarantees, validate performance in synthetic settings with controlled confounding, and demonstrate practical utility in estimating the impact of wildfire smoke exposure during the 2018 Camp Fire. Together, these results position GST-UNet as a \textbf{ready-to-use tool for practitioners}, offering reliable, interpretable causal estimates in complex spatiotemporal environments. We discuss limitations and broader impacts in \cref{app:limitations}.

\clearpage

\subsection*{Acknowledgements}

We thank the anonymous reviewers for their thoughtful feedback and the constructive dialogue during the review process, which greatly clarified our exposition and strengthened the final version of this work. Miruna Oprescu and BNL team (D. Park, X. Luo, S. Yoo) were supported by the U.S. Department of Energy, Office of Science, Office of Advanced Scientific Computing Research, under Awards DE-SC0023112 and DE-SC0012704, respectively. Nathan Kallus was supported by the U.S. National Science Foundation under Grant No. 1846210. Part of this work was conducted while Miruna Oprescu was a research intern at Brookhaven National Laboratory.
Any opinions, findings, and conclusions or recommendations expressed in this material are those of the author(s) and do not necessarily reflect the views of the U.S. Department of Energy or the U.S. National Science Foundation.

\bibliography{ref}
\bibliographystyle{abbrvnat}

%%%%%%%%%%%%%%%%%%%%%%%%%%%%%%%%%%%%%%%%%%%%%%%%%%%%%%%%%%%%%%%%%%%%%%%%%%%%%%%
%%%%%%%%%%%%%%%%%%%%%%%%%%%%%%%%%%%%%%%%%%%%%%%%%%%%%%%%%%%%%%%%%%%%%%%%%%%%%%%
% APPENDIX
%%%%%%%%%%%%%%%%%%%%%%%%%%%%%%%%%%%%%%%%%%%%%%%%%%%%%%%%%%%%%%%%%%%%%%%%%%%%%%%
%%%%%%%%%%%%%%%%%%%%%%%%%%%%%%%%%%%%%%%%%%%%%%%%%%%%%%%%%%%%%%%%%%%%%%%%%%%%%%%
\newpage
\appendix
\allowdisplaybreaks

\section{Extended Literature Review}\label{app:extended-lit}

\begin{table}[t]
\centering
\caption{Key differences between prior neural G-computation methods and GST-UNet.}
\small
\begin{tabular}{p{0.14\linewidth}p{0.36\linewidth}p{0.36\linewidth}}
\toprule
\textbf{Aspect} & \textbf{Prior Neural G-Computation} & \textbf{GST-UNet (ours)}\\
\midrule
Data structure & Many independent temporal trajectories (e.g., patient sequences); no inter-unit interactions such as spatial dependence. & Single spatiotemporal chain where outcomes, covariates, and treatments evolve jointly across a lattice; strong spatial coupling and interference.\\
\midrule
Encoder & RNN/Transformer over time only. & ConvLSTM-UNet encoder aggregates neighbour covariates/treatments before G-heads, capturing interference and spatial confounding.\\
\midrule
Training & Standard end-to-end due to i.i.d trajectories; stability arises from large data rather than curriculum or spatial priors.   & Curriculum-stabilized multi-head training for accurate pseudo-outcome generation under limited samples.\\
\midrule
Theory & Classical G-formula under i.i.d.\ trajectories; no single-chain guarantees. & Identification (\cref{thm:identification}) and consistency (\cref{thm:appendix-consistency}) under representation-based time-invariance for a single chain.\\
\bottomrule
\end{tabular}
\label{tab:compare-prior}
\end{table}

\textbf{Classical Spatiotemporal Causal Inference.} Early spatiotemporal causal inference methods--including spatial econometrics \citep{anselin2013spatial}, difference-in-differences \citep{keele2015geographic}, and synthetic controls \citep{ben2022synthetic}--provide useful frameworks for estimating treatment effects across regions but rely on strong assumptions such as parallel trends or stable treatment assignment. These approaches struggle with interference, nonlinear dependencies, and time-varying confounders, limiting their applicability in complex settings. More recent approaches for spatiotemporal causal inference handle time-varying confounding through inverse propensity weighting (IPW), typically by extending marginal structural models to the spatial or spatiotemporal domain. For instance, \citet{papadogeorgou2022causal} and \citet{zhou2024estimating} employ IPW-style adjustments to estimate regional average treatment effects across space and time. However, these approaches cannot accommodate interference unless strong assumptions are made--e.g., defining a user-specified exposure mapping or restricting attention to hyper--local interactions (see also \citep{wang2021causal, christiansen2022toward, papadogeorgou2022causal, zhang2023spatiotemporal}). Such simplifications may be ill-suited for real-world systems with rich spatial dependencies. Moreover, even recent advances in this space remain limited; as noted by \citet{zhou2024estimating}, the literature on spatiotemporal causal inference remains sparse, especially in settings with feedback loops or time-varying confounding.

\textbf{Machine Learning for Spatiotemporal Modeling.}
Spatiotemporal predictive modeling has seen rapid progress with the rise of deep learning. Convolutional and recurrent neural networks are widely used for forecasting spatially indexed time series (e.g., weather or traffic) \citep{shi2015convolutional, zhang2017deep}, while graph-based methods (e.g., Graph WaveNet \citep{wu2019graph}, Diffusion Convolutional RNN \citep{li2017diffusion}) capture non-Euclidean spatial dependencies. Vision transformer variants, including Video Swin Transformers \citep{liu2022video} and TimeSformer \citep{bertasius2021space}, extend attention-based models to spatiotemporal video data. These architectures can learn complex non-local interactions over space and time. However, such models are typically optimized for prediction tasks and do not include causal adjustments. Without mechanisms like propensity modeling or G-computation, they remain ill-equipped to estimate counterfactual outcomes or adjust for time-varying confounding. Some recent work integrates spatial representations for causal inference--e.g., \citet{tec2023weather2vec} incorporate non-local confounders using a UNet-based model--but these methods do not explicitly model dependencies over time or adjust for time-varying confounders.  

\textbf{Time-Series Causal Inference.}  
In the longitudinal domain, time-series causal inference has developed tools for handling temporal confounding using models such as marginal structural models \citep{robins2000marginal}, IPW-style estimation \citep{lim2018forecasting}, and iterative G-computation \citep{robins2008estimation}. Recent ML-based extensions include recurrent networks \citep{bica2020estimating, li2021g, seedat2022continuous}, Transformers \citep{melnychuk2022causal, hess2024g} and meta-learners \citep{frauen2024model}. However, all these methods assume access to independent time series--e.g., across units or patients--which allows for pooling across trajectories. These methods do not consider spatial dependencies, interference, or scenarios with a single observed spatiotemporal realization. As such, while they may handle time-varying confounding, they do not generalize to our setting. \Cref{tab:compare-prior} summarizes the key methodological differences between GST-UNet and prior neural G-computation frameworks.

\textbf{Neural-Based Spatiotemporal Causal Inference.}  
There has been limited work on neural models that explicitly address spatiotemporal causal inference. \citet{tec2023weather2vec} use a U-Net backbone to learn spatial representations for causal inference in air pollution studies but do not address time-varying confounding or feedback loops. \citet{ali2024estimating} present a U-Net–based architecture for predicting direct and indirect effects in climate contexts, but primarily focus on forecasting rather than causal identification. While these works highlight growing interest in neural approaches to causal inference in spatiotemporal domains, none incorporate an iterative adjustment procedure like G-computation that handles time-varying confounders, leaving identification in these settings largely unaddressed.

\textbf{Our Contribution.}
GST-UNet bridges these gaps by combining flexible spatiotemporal neural architectures with a theoretically grounded iterative G-computation framework. This allows valid estimation of potential outcomes in the presence of interference, spatial confounding, and time-varying confounding--without requiring practitioners to specify structural models or exposure mappings. To our knowledge, this is the first end-to-end framework to implement G-computation for causal inference over a single spatiotemporal trajectory. We integrate spatiotemporal processing via U-Nets and ConvLSTMs with a principled multi-head neural causal module, and we design a curriculum-based training strategy to stabilize learning of recursive pseudo-outcomes. Together, these components yield a ready-to-use tool for practitioners, with consistent identification guarantees and robust empirical performance. By abstracting away the modeling choices typically required in structural spatiotemporal methods, GST-UNet makes spatiotemporal causal estimation more accessible, interpretable, and reliable for real-world applications.

\section{\pfref{thm:identification}}\label{app:identification-proof}

We aim to show that under \cref{assump:standard} and \cref{assump:embedding}, the CAPOs in \cref{eq:capo} can be identified recursively from a single time series via a sequence of conditional expectations.

\paragraph{Step 1: Recursive decomposition for the intractable expectation} We first demonstrate the recursive decomposition of the intractable expectation in the CAPO definition (\cref{eq:capo}). While this expectation is theoretically well-defined, it cannot be directly estimated in practice due to the limited availability of data. Specifically, we only observe a single time series, meaning we have just one sample of the history at time $t+\tau$ for each $t$. Nevertheless, as we will show, we can convert these expectations into expectations over prefix-based segments that allow us to estimate these quantities from the data.

Starting from $\EE[\Yb_{t+\tau}[\ab_{t:t+\tau-1}]\mid \Hb_{1:t}=\hb_{1:t}]$, we have:
\begin{align*}
    &\EE[\Yb_{t+\tau}[\ab_{t:t+\tau-1}]\mid \Hb_{1:t}=\hb_{1:t}]\\
    & = \EE[\Yb_{t+\tau}[\ab_{t:t+\tau-1}]\mid \Hb_{1:t}=\hb_{1:t}, \Ab_t = \ab_{t}] \tag{Sequential ignorability and positivity (\cref{assump:standard})}\\
    & =\EE\big[\EE[\Yb_{t+\tau}[\ab_{t:t+\tau-1}]\mid \Hb_{1:t+1}^\ab]\mid \Hb_{1:t}=\hb_{1:t}, \Ab_t = \ab_{t}\big] \tag{Law of total probability}\\
    & = \EE\big[\EE[\Yb_{t+\tau}[\ab_{t:t+\tau-1}]\mid \Hb_{1:t+1}^\ab, \Ab_{t+1}=\ab_{t+1}]\mid \Hb_{1:t}=\hb_{1:t}, \Ab_t = \ab_{t}\big] \tag{Sequential ignorability and positivity}\\
    & = \EE\Big[\EE\big[\EE[\Yb_{t+\tau}[\ab_{t:t+\tau-1}]\mid \Hb_{1:t+2}^\ab] \;\big|\; \Hb_{1:t+1}^\ab, \Ab_{t+1}=\ab_{t+1}\big] \;\Big|\; \Hb_{1:t}=\hb_{1:t}, \Ab_t = \ab_{t}\Big] \tag{Law of total probability}\\
    & = \EE\Big[\EE\big[\EE[\Yb_{t+\tau}[\ab_{t:t+\tau-1}]\mid \Hb_{1:t+2}^\ab, \Ab_{t+2}=\ab_{t+2}] \;\big|\; \Hb_{1:t+1}^\ab, \Ab_{t+1}=\ab_{t+1}\big]  \;\Big|\; \Hb_{1:t}=\hb_{1:t}, \Ab_t = \ab_{t}\Big] \tag{Sequential ignorability and positivity}\\
    & \dots\\
    & = \EE\Big[\dots\EE\big[\EE[\Yb_{t+\tau}[\ab_{t:t+\tau-1}]\mid \Hb_{1:t+\tau-1}^\ab, \Ab_{t+\tau-1}=\ab_{t+\tau-1}] \\
    & \hspace{4.35cm}\;\big|\; \Hb_{1:t+\tau-2}^\ab, \Ab_{t+\tau-2}=\ab_{t+\tau-2}\big] \\
    & \hspace{4.35cm} \;\big|\; \dots \\
    & \hspace{4.35cm} \;\Big|\; \Hb_{1:t}=\hb_{1:t}, \Ab_t = \ab_{t}\Big] \tag{Sequential ignorability and positivity}\\
    & = \EE\Big[\dots\EE\big[\EE[\Yb_{t+\tau}\mid \Hb_{1:t+\tau-1}^\ab, \Ab_{t+\tau-1}=\ab_{t+\tau-1}] \\
    & \hspace{3cm}\;\big|\; \Hb_{1:t+\tau-2}^\ab, \Ab_{t+\tau-2}=\ab_{t+\tau-2}\big] \\
    & \hspace{3cm} \;\big|\; \dots \\
    & \hspace{3cm} \;\Big|\; \Hb_{1:t}=\hb_{1:t}, \Ab_t = \ab_{t}\Big] \tag{Consistency}
\end{align*}
Thus, if we had multiple spatiotemporal time-series samples, we could directly estimate this nested expression from data, since the right-hand side depends solely on observed quantities, ensuring identifiability.

\paragraph{Step 2: From intractable to prefix-based expectations} We now show how to estimate the nested expectations using the prefix data. First, by \cref{assump:embedding}, we can rewrite the inner-most expectation as
\begin{align*}
    \EE[\Yb_{t+\tau}\mid \Hb_{1:t+\tau-1}^\ab, \Ab_{t+\tau-1}=\ab_{t+\tau-1}] & = \EE_\Pb[\Yb_{t+\tau}\mid \phi(\Hb_{1:t+\tau-1}^\ab, \ab_{t+\tau-1})]\\
    & = Q_\tau(\Hb_{1:t+\tau-1}^\ab, \ab_{t+\tau-1}). \tag{Definition of $Q_\tau$}
\end{align*}
Thus, by using \cref{assump:standard}, we can write this expectation over the prefix data which we have many samples of. Now consider the next nested expectation:
\begin{align*}
    & \EE[Q_\tau(\Hb_{1:t+\tau-1}^\ab, \ab_{t+\tau-1})\mid \Hb_{1:t+\tau-2}^\ab=\hb_{1:t+\tau-2}^\ab, \Ab_{t+\tau-2}=\ab_{t+\tau-2}]\\
    & = \int Q_\tau(\hb_{1:t+\tau-1}^\ab, \ab_{t+\tau-1}) p(x_{t+\tau-1}, y_{t+\tau-1}\mid \hb^\ab_{t+\tau-2}, \ab_{t+\tau-2}) d(x_{t+\tau-1}, y_{t+\tau-1})\\
    & = \int_P Q_\tau(\hb_{1:t+\tau-1}^\ab, \ab_{t+\tau-1}) p(x_{t+\tau-1}, y_{t+\tau-1}\mid \phi(\hb^\ab_{t+\tau-2}, \ab_{t+\tau-2})) d(x_{t+\tau-1}, y_{t+\tau-1}) \tag{\cref{assump:embedding}}\\
    & = \EE_P[Q_\tau(\Hb_{1:t+\tau-1}^\ab, \ab_{t+\tau-1})\mid \phi(\Hb_{1:t+\tau-2}^\ab, \Ab_{t+\tau-2})=\phi(\hb_{1:t+\tau-2}^\ab, \ab_{t+\tau-2})]\\
    & = Q_{\tau-1}(\hb_{1:t+\tau-2}^\ab, \ab_{t+\tau-2})
\end{align*}
Tracing this argument recursively through the nested expectation in Step 1, we obtain:
\begin{align*}
    \EE[\Yb_{t+\tau}[\ab_{t:t+\tau-1}]\mid \Hb_{1:t}=\hb_{1:t}] = Q_1(\hb_{1:t}, \ab_t),
\end{align*}
as desired. Thus, $Q_1$ -- which can be estimated from the prefix data -- recovers the CAPOs, under our assumptions, even from a single chain.

\section{Consistency of the Iterative G-Computation Estimator} \label{app:consistency-proof}

In this section, we state the conditions under which the iterative G-computation procedure in \cref{sec:iterative-g} yields a consistent estimator, and show that our implementation of the $Q_k$ estimators satisfies these conditions.

\textbf{Notation:} We denote the $L_2$ norm of a function $f$ as $\|f\|_2 := \EE_P[f(X)^2]^{1/2}$, where the expectation is over the probability distribution $P$. The notation $\widehat{f}_n$ represents the estimated value of a parameter or function learned on $n$ data points, where $f$ is the true value. For a sequence of random variables \(\{Z_n\}_{n\ge1}\)
we write $Z_n=o_p(1)$ if $\Pr(|Z_n|>\varepsilon)\to0$ for every $\varepsilon>0$, \ie $Z_n\xrightarrow{p}0$. 

To begin, we introduce the following stochastic equicontinuity condition from \citep{van1996weak}: 

\begin{definition}[Stochastic equicontinuity
        {\citep[Def.~1.5.7]{van1996weak}}]  \label{def:usec}
Let $(\mathcal Z,d)$ be a semi-metric space and
$\{\widehat f_n\}_{n\ge1}\subset\ell^\infty(\mathcal Z)$
a sequence of random functions.
It is \emph{uniformly stochastically equi-continuous} if, for every
$\epsilon>0, \eta>0$, there exists a $\delta>0$ such that
\begin{align*}
  \limsup_{n\to\infty}\;
     P \!\Bigl(
        \sup_{d(z,z')\le\delta}
            \bigl|\widehat f_n(z)-\widehat f_n(z')\bigr|
        >\epsilon
     \Bigr)<\eta.
\end{align*}
\end{definition}

Stochastic equicontinuity ensures that, with high probability, each estimator changes only slightly when its input is perturbed by a small amount. It is strictly weaker than global Lipschitz continuity -- any family that is Lipschitz on a bounded domain with constants bounded in probability automatically satisfies Definition \ref{def:usec}. We impose this condition in Theorem \ref{thm:appendix-consistency} so that the $o_p(1)$ error in the learned embedding propagates to only $o_p(1)$ errors in the G-heads, making the recursive estimator consistent.

The following theorem restates Theorem~\ref{thm:consistency-main} from the main text in full detail and provides its proof.

\begin{theorem}[Consistency under Uniform Stochastic Equicontinuity]
\label{thm:appendix-consistency}
Suppose the conditions of Theorem~\ref{thm:identification} hold, and let $\widehat\phi$ be a learned embedding. Define $\textbf{Z}_k := (\textbf{H}_{1:t+k}, \textbf{A}_{t+k})$, and recursively define the learned estimators $\widehat Q_k(\textbf{Z}_{k-1}; \widehat{\phi}) := \widehat{\EE}_\Pb[\widehat Q_{k+1}(\textbf{Z}_k; \widehat{\phi}) \mid \widehat\phi(\textbf{Z}_k)]$ for $k=1,\dots,\tau$, with terminal condition $\widehat Q_{\tau+1}(\textbf{Z}_{\tau}; \widehat{\phi}) = Y^{t+\tau}$. Assume that $\{\widehat Q_k\}_{k=1}^{\tau}$ are obtained via the iterative G-computation algorithm. If:
\begin{enumerate}[label=(\roman*), leftmargin=*]
\item $\|\widehat\phi-\phi\|_2=o_p(1)$;
\item $\|\widehat Q_k\!\bigl(\textbf{Z}_{k-1}; \phi\bigr)
          -Q_k\!\bigl(\textbf{Z}_{k-1}; \phi)\bigr)\|_2=o_p(1)$
      for all $k$;
\item for every $k$ the random maps
      $z\mapsto\widehat Q_k(h,a;z)$ are stochastically
      equicontinuous on $\operatorname{Im}\phi$
      (Definition~\ref{def:usec}), and $Q_k(\cdot)$ is uniformly
      continuous there,
\end{enumerate}
then \begin{align*}
    \left\|\widehat Q_1\!\bigl(\textbf{Z}_0;\widehat\phi\bigr)
      -Q_1\!\bigl(\textbf{Z}_0;\phi\bigr)\right\|_2
      =o_p(1).
\end{align*}
Thus the recursive G-computation estimator is
(probabilistically) consistent.
\end{theorem}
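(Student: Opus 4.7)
My strategy is a backward induction on $k$ from $\tau$ down to $1$, carrying the inductive hypothesis $\|\widehat Q_{k+1}(\Zb_{k};\widehat\phi) - Q_{k+1}(\Zb_{k};\phi)\|_2 = o_p(1)$. The base case $k = \tau$ is essentially immediate: since the training target at the top of the recursion is the observed outcome $\Yb_{t+\tau}$, no upstream $\widehat Q$ appears, and a triangle inequality splits the error into $\|\widehat Q_\tau(\cdot;\widehat\phi)-\widehat Q_\tau(\cdot;\phi)\|_2$ -- which is $o_p(1)$ by equicontinuity~(iii) together with embedding consistency~(i) -- and $\|\widehat Q_\tau(\cdot;\phi)-Q_\tau(\cdot;\phi)\|_2$ -- which is $o_p(1)$ directly by assumption~(ii).

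For the inductive step I would introduce an ``oracle'' intermediate estimator
\begin{align*}
\widetilde Q_k(\Zb_{k-1};\phi) := \widehat{\EE}_\Pb\bigl[\,Q_{k+1}(\Zb_k;\phi)\,\bigm|\,\phi(\Zb_{k-1})\bigr],
\end{align*}
i.e.\ the finite-sample regression one would obtain with the true embedding and the true upstream targets. Two successive triangle inequalities then split the error into three pieces, each attributable to a single assumption:
\begin{align*}
\underbrace{\bigl\|\widehat Q_k(\cdot;\widehat\phi) - \widehat Q_k(\cdot;\phi)\bigr\|_2}_{(A)\ \text{embedding swap}}
+ \underbrace{\bigl\|\widehat Q_k(\cdot;\phi) - \widetilde Q_k(\cdot;\phi)\bigr\|_2}_{(B)\ \text{upstream-target swap}}
+ \underbrace{\bigl\|\widetilde Q_k(\cdot;\phi) - Q_k(\cdot;\phi)\bigr\|_2}_{(C)\ \text{regression noise}}.
\end{align*}
Term (A) is $o_p(1)$ by equicontinuity~(iii) applied to the map $z\mapsto\widehat Q_k(\cdot;z)$, composed with $\|\widehat\phi-\phi\|_2 = o_p(1)$; term (C) is $o_p(1)$ by regression consistency~(ii) at level $k$; and term (B) -- which carries the propagation of upstream error -- can be bounded by the $L_2$ distance of the targets $\|\widehat Q_{k+1}(\cdot;\widehat\phi) - Q_{k+1}(\cdot;\phi)\|_2$, itself $o_p(1)$ by the inductive hypothesis, provided the fitted conditional-expectation operator (e.g.\ empirical least-squares projection onto $\Qcal$) is non-expansive in its training targets. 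Iterating from $k=\tau$ down to $k=1$ yields the claim.

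The main obstacle will be converting the pointwise-in-probability consequence of Definition~\ref{def:usec} into an $L_2$ rate for term (A). Equicontinuity combined with $\widehat\phi\to\phi$ only directly gives $|\widehat Q_k(\cdot;\widehat\phi)-\widehat Q_k(\cdot;\phi)|=o_p(1)$ pointwise on $\mathrm{Im}\,\phi$; upgrading this to $L_2$ convergence requires a uniform-integrability argument. The cleanest remedy is to assume the $\widehat Q_k$ are uniformly bounded (with high probability) on a compact enlargement of $\mathrm{Im}\,\phi$ -- a realistic condition when outcomes are bounded or predictions are clipped -- so that dominated convergence closes the gap. A related subtlety affects term (B): reducing the upstream-target error to an $L_2$ distance requires that the empirical conditional-expectation operator has operator norm bounded in probability, which is automatic for empirical least-squares projections but must be verified for more exotic $\Qcal$. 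With these two technical points settled, the rest is clean bookkeeping of the three error terms across the $\tau$ induction levels.
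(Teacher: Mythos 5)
Your proposal follows the same overall route as the paper's proof---backward induction from $k=\tau$ to $k=1$, with a triangle-inequality decomposition whose embedding-swap term is controlled by (i) plus the equicontinuity in (iii) and whose remaining term is controlled by (ii)---but your decomposition in the inductive step is genuinely finer. The paper uses only a two-term split, $\|\widehat Q_k(\cdot;\widehat\phi)-\widehat Q_k(\cdot;\phi)\|_2+\|\widehat Q_k(\cdot;\phi)-Q_k(\cdot;\phi)\|_2$, and disposes of the second term by citing assumption (ii) directly; as a consequence the stated inductive hypothesis is never actually invoked, and the propagation of upstream pseudo-outcome error is implicitly absorbed into the interpretation of (ii). Your three-term split with the oracle $\widetilde Q_k$ makes that propagation explicit and is the version that actually reflects what the algorithm does (regressing on pseudo-outcomes generated by $\widehat Q_{k+1}(\cdot;\widehat\phi)$, not on $Q_{k+1}(\cdot;\phi)$); the price is the extra non-expansiveness condition on the empirical projection, which the paper does not state. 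You are also right to flag that Definition~\ref{def:usec} plus $\|\widehat\phi-\phi\|_2=o_p(1)$ only yields convergence of $\sup|\widehat Q_k(\cdot;\widehat\phi)-\widehat Q_k(\cdot;\phi)|$ in probability on a $\delta$-neighborhood, and that upgrading to an $L_2$ statement needs uniform boundedness or a dominated-convergence step---the paper asserts this conversion without comment. In short: same skeleton, but your version is the more careful one; to match the paper you could simply absorb your terms (B) and (C) into a strengthened reading of (ii), at the cost of the transparency you have gained.
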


\begin{proof}
We proceed by reverse induction on $k$, starting from $k = \tau$ and working backward to $k = 1$. For each $k$, we aim to show:
\begin{align*}
    \|\widehat{Q}_k(\mathbf{Z}_{k-1}; \widehat{\phi}) - Q_k(\mathbf{Z}_{k-1}; \phi)\|_2 = o_p(1).
\end{align*}
\textbf{Base case ($k = \tau$).} By definition, $\widehat{Q}_{\tau+1}(\mathbf{Z}_\tau; \widehat{\phi}) = Y^{t+\tau}$, which is observed. Thus,
\begin{align*}
    \widehat{Q}_\tau(\mathbf{Z}_{\tau-1}; \widehat{\phi}) = \widehat{\EE}_\Pb[Y^{t+\tau} \mid \widehat{\phi}(\mathbf{Z}_\tau)]
\quad\text{and}\quad
Q_\tau(\mathbf{Z}_{\tau-1}; \phi) = \EE[Y^{t+\tau} \mid \phi(\mathbf{Z}_\tau)].
\end{align*}
We decompose the difference: 
\begin{align*}
\left\|\widehat{Q}_\tau(\mathbf{Z}_{\tau-1}; \widehat{\phi}) - Q_\tau(\mathbf{Z}_{\tau-1}; \phi)\right\|_2
\le 
\underbrace{\left\|\widehat{Q}_\tau(\mathbf{Z}_{\tau-1}; \widehat{\phi}) - \widehat{Q}_\tau(\mathbf{Z}_{\tau-1}; \phi)\right\|_2}_{\Lambda_1} + 
\underbrace{\left\|\widehat{Q}_\tau(\mathbf{Z}_{\tau-1}; \phi) - Q_\tau(\mathbf{Z}_{\tau-1}; \phi)\right\|_2}_{\Lambda_2}.
\end{align*}
Term $\Lambda_2$ is $o_p(1)$ by assumption (ii). Term $\Lambda_1$ converges to zero in probability due to (i) $\|\widehat{\phi} - \phi\|_2 = o_p(1)$ and (iii) stochastic equicontinuity of $\widehat{Q}_\tau$. Therefore,
\begin{align*}
    \left\|\widehat{Q}_\tau(\mathbf{Z}_{\tau-1}; \widehat{\phi}) - Q_\tau(\mathbf{Z}_{\tau-1}; \phi)\right\|_2 = o_p(1).
\end{align*}

\textbf{Inductive step.} Suppose for some $k+1 \le \tau$ that
\begin{align*}
    \left\|\widehat{Q}_{k+1}(\mathbf{Z}_k; \widehat{\phi}) - Q_{k+1}(\mathbf{Z}_k; \phi)\right\|_2 = o_p(1).
\end{align*}
We now consider
\begin{align*}
    \widehat{Q}_k(\mathbf{Z}_{k-1}; \widehat{\phi}) = \widehat{\EE}_\Pb[\widehat{Q}_{k+1}(\mathbf{Z}_k; \widehat{\phi}) \mid \widehat{\phi}(\mathbf{Z}_k)], \quad Q_k(\mathbf{Z}_{k-1}; \phi) = \EE[Q_{k+1}(\mathbf{Z}_k; \phi) \mid \phi(\mathbf{Z}_k)].
\end{align*}
Again, decompose:
\begin{align*}
\left\|\widehat{Q}_k(\mathbf{Z}_{k-1}; \widehat{\phi}) - Q_k(\mathbf{Z}_{k-1}; \phi)\right\|_2 
&\le 
\left\|\widehat{Q}_k(\mathbf{Z}_{k-1}; \widehat{\phi}) - \widehat{Q}_k(\mathbf{Z}_{k-1}; \phi)\right\|_2 \\
&\quad + \left\|\widehat{Q}_k(\mathbf{Z}_{k-1}; \phi) - Q_k(\mathbf{Z}_{k-1}; \phi)\right\|_2.
\end{align*}
The second term is $o_p(1)$ by assumption (ii). The first term is also $o_p(1)$ because $\widehat{\phi} \to \phi$ in $L_2$ and the stochastic equicontinuity of $\widehat{Q}_k$ ensures that perturbations in $\phi$ yield small changes in predictions uniformly over $\operatorname{Im} \phi$.
Thus,
\begin{align*}
\|\widehat{Q}_k(\mathbf{Z}_{k-1}; \widehat{\phi}) - Q_k(\mathbf{Z}_{k-1}; \phi)\|_2 = o_p(1).
\end{align*}
By induction, the result holds for all $k = \tau, \tau-1, \dots, 1$, and in particular:
\begin{align*}
\|\widehat{Q}_1(\mathbf{Z}_0; \widehat{\phi}) - Q_1(\mathbf{Z}_0; \phi)\|_2 = o_p(1).
\end{align*}
Thus, the proof is now complete.
\end{proof}

\begin{example}[Feed-forward or convolutional heads]
Suppose each G-computation head \( Q_k(\cdot; z) \) is implemented as a depth-\(d\) neural network 
\begin{align*}
\Psi(z) = W_d \sigma_{d-1} \bigl( \cdots \sigma_1(W_1 z) \bigr),
\end{align*}
where the activations \( \sigma_\ell \) are Lipschitz continuous (e.g., ReLU, Leaky ReLU, SoftPlus, Tanh, Sigmoid, or ArcTan). If each layer weight satisfies a spectral norm bound \( \|W_\ell\|_2 \le \rho_\ell < \infty \), then \( \Psi \) is globally Lipschitz on \( \RR^h \) with constant \( L = \prod_{\ell} \rho_\ell \), and thus uniformly continuous on any compact subset. This implies the stochastic equicontinuity condition in \cref{def:usec}. 

In practice, norm control can be enforced via weight decay, spectral normalization, or weight clipping during training. Similarly, the encoder output \( \widehat{\phi}(H, A) \) can be bounded—e.g., through normalization or clipping—so its image lies in a compact subset of \( \RR^h \). Together, these ensure the continuity and equicontinuity conditions required by \cref{thm:appendix-consistency}.

The same argument applies to convolutional networks, since 2-D convolutions are linear operators whose induced matrix representations also admit spectral norm bounds controlled via spectral normalization.
\end{example}

\section{Experimental Details}\label{sec:app-experiments}

In this appendix, we provide further information on the simulation experiments (\cref{sec:exp-sim}) and the real-world wildfire application (\cref{sec:exp-wildfire}), including exact parameter settings, model architecture and execution details, hyperparameter selection strategies, and validation procedures. All code for generating, preprocessing, and analyzing both the synthetic and real-world datasets—and for training and evaluating GST-UNet—is available at \url{https://github.com/moprescu/GSTUNet}, with step-by-step replication instructions in the repository’s \texttt{README.md}.

For both applications, GST-UNet employs a U-Net backbone with a single ConvLSTM layer (hidden dimension 32) and a contracting-expanding path of channel sizes $16\rightarrow 32\rightarrow 64\rightarrow 128\rightarrow 256$. The G-computation heads are implemented as shallow feed-forward neural networks that operate on the U-Net feature maps at each grid cell for G-computation. In practice, to ensure stable ConvLSTM training and reduce computational overhead, we truncate the input history to a fixed length.
All neural networks are implemented via the \texttt{nn} module in \texttt{PyTorch}~\citep{pytorch}. Experiments were conducted on an NVIDIA A100 (Ampere) GPU using the Perlmutter system at the National Energy Research Scientific Computing Center (NERSC). The synthetic experiments required roughly 55 minutes per hyperparameter set, while the wildfire experiment completed in about 5 minutes.

\begin{figure*}[t]
    \centering
    \includegraphics[width=0.85\textwidth]{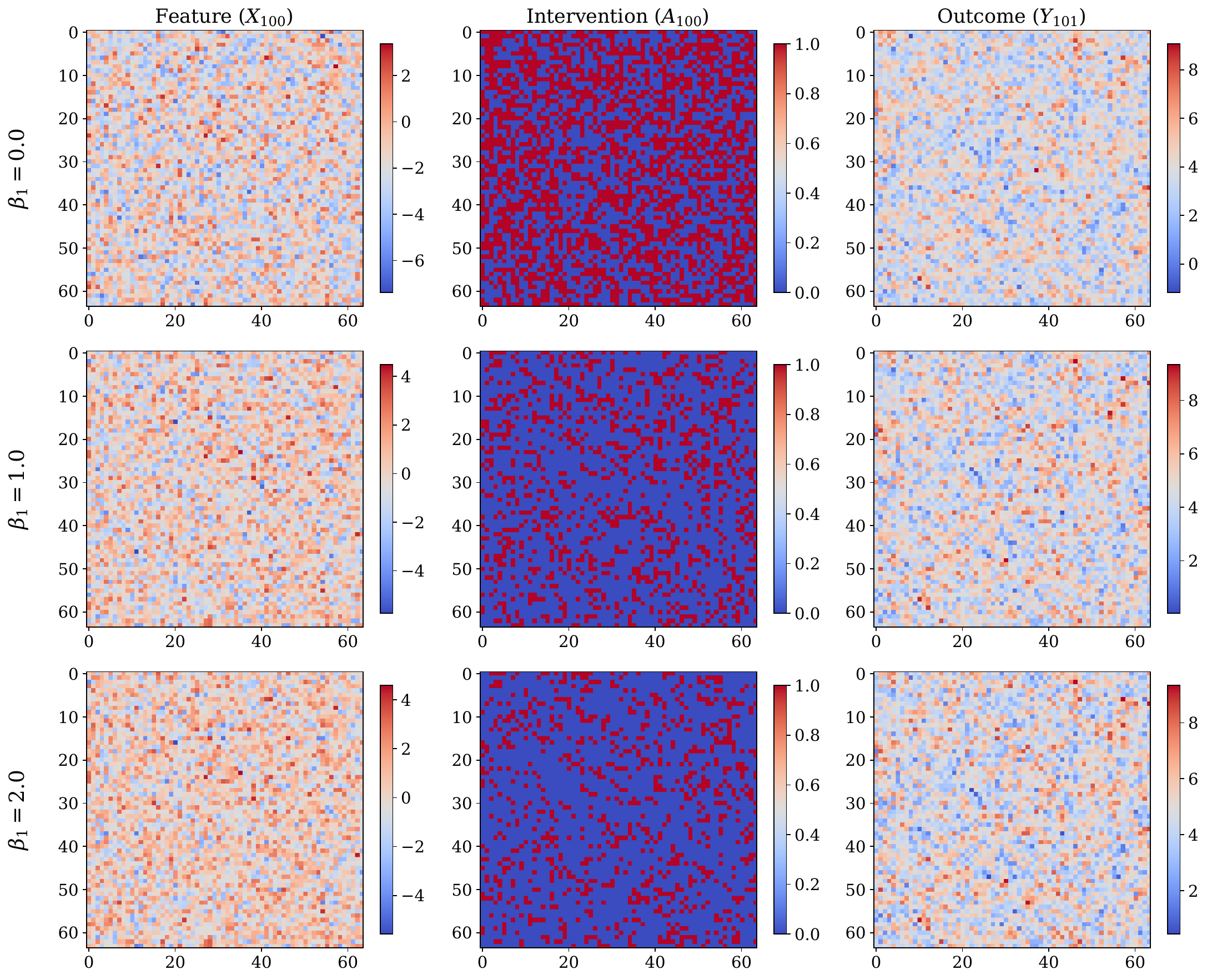}
    \caption{
    Samples from the DGP at $t=100$, comparing feature $X_{100}$ (left), 
    intervention $A_{100}$ (center), and outcome $Y_{101}$ (right) 
    for varying $\beta_1\in\{0.0,1.0,2.0\}$.}
    \label{fig:dgp_samples} \vspace{-1em}
\end{figure*}

\subsection{Synthetic Experiments}\label{sec:app-sim}

\textbf{Data Simulation Process.} For our primary simulation experiments, we generate $T=200$ time steps on a $64 \times 64$ grid. The simulation parameters in the generating equations
\begin{align*}
\Xb_{t} &= \alpha_0 + \alpha_1 \Xb_{t-1} 
+ \alpha_2 \Ab_{t-1} 
+ \alpha_3 (K_X * \Xb_{t-1}) 
+ \epsilon_X,\\
\Ab_{t} &\sim \mathrm{Bern}\!\Bigl(\sigma\!\bigl(\beta_1\!\bigl(\beta_0 
+ \textcolor{TemporalGreen}{\tfrac{1}{L}\!\sum_{l=0}^{L-1}}\,\textcolor{SpatialBlue}{K_A * \Xb_{t-l}}\bigr)\bigr)\Bigr),\\
\Yb_{t} &= \gamma_0 + \gamma_1 \bigl(\textcolor{InterferenceRed}{K_{YA} * \Ab_{t-1}}\bigr) 
+ \gamma_2 \textcolor{TemporalGreen}{\tfrac{1}{L}\!\sum_{l=1}^L}\,\bigl(\textcolor{SpatialBlue}{K_{YX} * \Xb_{t-l}}\bigr)+ \gamma_3 \textcolor{TemporalGreen}{\Yb_{t-1}} 
+ \epsilon_Y,
\end{align*}
are given by:
\begin{itemize}
    \item $\Xb_t$:\begin{equation*}
        \alpha_0=0.5,\; \alpha_1=0.5,\; \alpha_2=-2.0,\; \alpha_3=0.2,\;  K_X = 
            \begin{pmatrix}
            0   & 0.45 & 0   \\
            0.15& 0   & 0.35\\
            0   & 0.05 & 0   
            \end{pmatrix}.
    \end{equation*}
    where $K_X$ influences how $\Xb$ diffuses across neighboring cells, with an asymmetry due to advection.
    \item $\Ab_t$: 
    \begin{equation*}
        \beta_0=-1.0, \;\beta_1\in\{0.0, 0.5, 1.0, 1.5, 2.0, 2.5, 3.0\},\; K_A = \frac{1}{16}
            \begin{pmatrix}
            1.0 & 1.0 & 1.0 \\
            1.0 & 8.0 & 1.0 \\
            1.0 & 1.0 & 1.0
            \end{pmatrix}.  
    \end{equation*}
    \item $\Yb_t$:
    \begin{flalign*}
        &\gamma_0 =2.0,\; \gamma_1=1.5,\; \gamma_2=0.5,\; \gamma_3=0.5 &\\
        & K_{YX} = 
            \frac{1}{16}
            \begin{pmatrix}
            1.0 & 1.0 & 1.0 \\
            1.0 & 8.0 & 1.0 \\
            1.0 & 1.0 & 1.0
            \end{pmatrix}  ,\;
            K_{YA} = 
            \frac{1}{16}
            \begin{pmatrix}
            1.0 & 1.0 & 1.0 \\
            1.0 & 8.0 & 1.0 \\
            1.0 & 1.0 & 1.0
            \end{pmatrix}.
    \end{flalign*}
\end{itemize}

\begin{table}[t]
    \centering
    \caption{Hyperparameters and their ranges. We boldface the values that provided the best validation performance.}
    \label{tab:sim-hyperparam}
    \begin{tabular}{l l l}
    \toprule
    \textbf{Hyperparameter} & \textbf{Model(s)} & \textbf{Value Range} \\
    \midrule
    Batch size & All models & \{2, \textbf{4}, 8\} \\
    Learning rate & All models & \{$10^{-4}$, $\mathbf{5\times 10^{-4}}$, $10^{-3}$\} \\
    Scheduler patience & All models & \{3, \textbf{5}, 10\} \\
    Early stopping patience & All models & \{5, \textbf{10}\} \\
    Curriculum period & GST-UNet & \{1, \textbf{3}, 5, 7\} \\
    Curriculum learning rate & GST-UNet & \{$10^{-4}$, $\mathbf{5\times 10^{-4}}$, $10^{-3}$\} \\
    UNet output dim $d_h$ & GST-UNet & \{8, \textbf{16}, 32\} \\
    G-head hidden size & GST-UNet & \{\textbf{8}, 16\} \\
    G-head layers & GST-UNet & \{\textbf{1}, 2, 3\} \\
    \bottomrule
    \end{tabular}
\end{table}

\begin{table}[t]
\centering
\caption{Ablation on spatial kernel size ($\tau=5$).  Removing neighbor aggregation ($1\times1$ kernel) degrades performance, confirming the need to model spatial spill-overs.}
\begin{tabular}{lccccc}
\toprule
\textbf{Kernel size} & $\beta_1{=}0.0$ & $0.5$ & $1.0$ & $1.5$ & $2.0$\\
\midrule
$3\times3$ & \textbf{0.33 $\pm$ 0.004} & \textbf{0.35 $\pm$ 0.004} & \textbf{0.40 $\pm$ 0.005} & \textbf{0.44 $\pm$ 0.004} & \textbf{0.40 $\pm$ 0.005}\\
$1\times1$ & 0.53 $\pm$ 0.004 & 0.55 $\pm$ 0.005 & 0.54 $\pm$ 0.005 & 0.60 $\pm$ 0.007 & 0.64 ± 0.006\\
\bottomrule
\end{tabular}
\label{tab:kernel-ablation}
\end{table}

We use $L=5$ temporal lags for \(\Xb\) and \(\Yb\), a seed of $42$ for reproducibility. The parameter values were chosen such that the simulation remains stable (\emph{i.e.}, the process does not diverge). See Figure~\ref{fig:dgp_samples} for representative $t=100$ snapshots of $X_{100}$, $A_{100}$, and $Y_{101}$ under varying $\beta_1$.

For each $\beta_1$, we first generate a factual dataset of length $T=200$ (\ie, $\{(\Xb_t,\Ab_t,\Yb_t)\}_{t=1}^{200}$). We then create $n_{\text{test}}=50$ test histories of length $l_H=10$. For each test history, we simulate $100$ trajectories under a randomly chosen (yet fixed over the test data) counterfactual intervention of length $\tau=10$, and average the outcomes at each step to approximate the true CAPOs. This procedure yields a final test set of shape $n_{\text{test}} \times (l_H + \tau + 1) \times 64 \times 64$, \ie, $50 \times 21 \times 64 \times 64$.

\textbf{Neural Architectures.}
The \textbf{GST-UNet} comprises a single ConvLSTM layer (hidden dimension $32$), followed by a U-Net with channel sizes $16 \!\rightarrow\! 32 \!\rightarrow\! 64 \!\rightarrow\! 128 \!\rightarrow\! 256$. Its G-computation heads are shallow feed-forward networks operating on the final U-Net feature maps at each grid cell; both the U-Net’s output dimension ($d_h$) and the G-head architecture (number of layers, hidden size) are treated as hyperparameters. The \textbf{UNet+} baseline uses the same ConvLSTM+U-Net backbone as GST-UNet but outputs a single channel ($d_h=1$), omitting any G-computation. For direct comparison, we also implement \textbf{STCINet}~\citep{ali2024estimating} with an identical ConvLSTM+U-Net backbone, and retaining their original Latent Factor Model (LFM) details. 

\textbf{IPWUNet Baseline.}
We adapt the Inverse Propensity Weighting (IPW) estimator from \citep{zhou2024estimating} to the spatiotemporal setting. Given estimated propensities $\hat{\pi}(\ab_l \mid \Hb_{1:l})$, the estimator is defined as:
\begin{align*}
\hat{Y}_{t+\tau}^{\text{IPW}} = \left(\prod_{l=t}^{t+\tau} \frac{\II[\Ab_l = \ab_l]}{\hat{\pi}(\ab_l \mid \Hb_{1:l})}\right), \quad
\text{CAPO} = \mathbb{E}[\hat{Y}_{t+\tau}^{\text{IPW}} \mid \Hb_{1:t} = \hb_{1:t}].
\end{align*}
We implement the IPWUNet baseline by reusing the UNet+ architecture (U-Net + ConvLSTM + Attention) for both propensity estimation and outcome prediction. Specifically, we first train the propensity model with a binary cross-entropy loss to estimate $\hat{\pi}(\Ab_t \mid \Hb_t)$ at each time $t$. We then freeze this model and use the estimated weights to train a second instance of the same architecture with a weighted MSE loss, where pseudo-outcomes are reweighted by the estimated inverse propensities along the counterfactual treatment path. While this allows partial adjustment for time-varying confounding, the method does not correct for spatial interference and is sensitive to small propensity values, which can lead to high variance.

\textbf{Training Details.}
We randomly initialize all model parameters (GST-UNet and baselines) with Xavier uniform weights~\citep{glorot2010understanding}. We use the Adam optimizer \citep{kingma2014adam} with an initial learning rate, halving it whenever the validation loss plateaus for a specified scheduler patience. To mitigate overfitting, we adopt early stopping when the validation loss fails to improve for a specified early stopping patience epochs. Validation uses 40 of the 190 training prefixes, and the total training is capped at 100 epochs. We tune the following hyperparameters: 
\emph{(i)}~batch size, learning rate, scheduler patience, and early stopping patience (common to all models);
\emph{(ii)}~for GST-UNet, the curriculum period and learning rate for curriculum phases, the U-Net output dimension $d_h$, and the number and width of hidden layers in the feed-forward G-heads.
Table~\ref{tab:sim-hyperparam} lists the hyperparameter ranges considered, with the values yielding the best validation performance in \textbf{bold}.

\textbf{Evaluation Procedure.} We evaluate each model by averaging the root mean square error (RMSE) of the estimated CAPOs against ground truth across 50 test trajectories. Table~\ref{tab:sim-results} in the main text reports RMSE~$\pm$~standard deviation for horizon lengths $\tau\in\{5,\,10\}$ and $\beta_1\in\{0,0.5,1.0,1.5,2.0\}$.

\begin{table}[t]\label{tab:varying-T}
    \centering
    \caption{Effect of increasing $T$ on RMSE for $\beta_1=2.0$. GST-UNet improves with more data, while baselines remain biased.}
    \begin{tabular}{lccccc}
        \toprule
        \textbf{Model} & \textbf{T=100} & \textbf{T=200} & \textbf{T=400} & \textbf{T=600} & \textbf{T=800} \\
        \midrule
        UNet+ & 0.78 & 0.81 & 0.82 & 0.95 & 0.87 \\
        STCINet & 0.80 & 0.90 & 1.04 & 1.02 & 0.91 \\
        GST-UNet & \textbf{0.69} & \textbf{0.40} & \textbf{0.32} & \textbf{0.32} & \textbf{0.36} \\
        \bottomrule
    \end{tabular}
\end{table}

\textbf{Effect of Varying $T$.} 
We ran additional simulations varying the trajectory length $T \in \{100, 200, 400, 600, 800\}$  and $\beta_1=2.0$ while keeping the grid size fixed ($d_x = d_y = 64$). Results are shown \cref{tab:varying-T}. GST-UNet consistently improves with more data, while the baselines remain biased—even as $T$ increases. This highlights the importance of adjusting for time-varying confounding: without it, there is a persistent asymptotic bias.

\textbf{Effect of Neighbor Aggregation.} 
To evaluate the importance of spatial spill-over modeling, we ablate the convolutional kernel used in the ConvLSTM encoder. 
\Cref{tab:kernel-ablation} compares GST-UNet with a standard $3\times3$ kernel against a variant that removes neighbor aggregation by using a $1\times1$ kernel. 
Across all levels of confounding strength ($\beta_1$), performance deteriorates markedly when neighbor information is excluded, with RMSE increasing by 30--40\%. 
This confirms that explicitly aggregating information from nearby locations is essential for capturing spatial interference and achieving unbiased counterfactual estimates.

\subsection{Wildfire Application}
\label{sec:app-wildfire}

\begin{figure*}[t]
    \centering
    % Left: Daily respiratory illness incidence
    \includegraphics[width=0.352\textwidth]{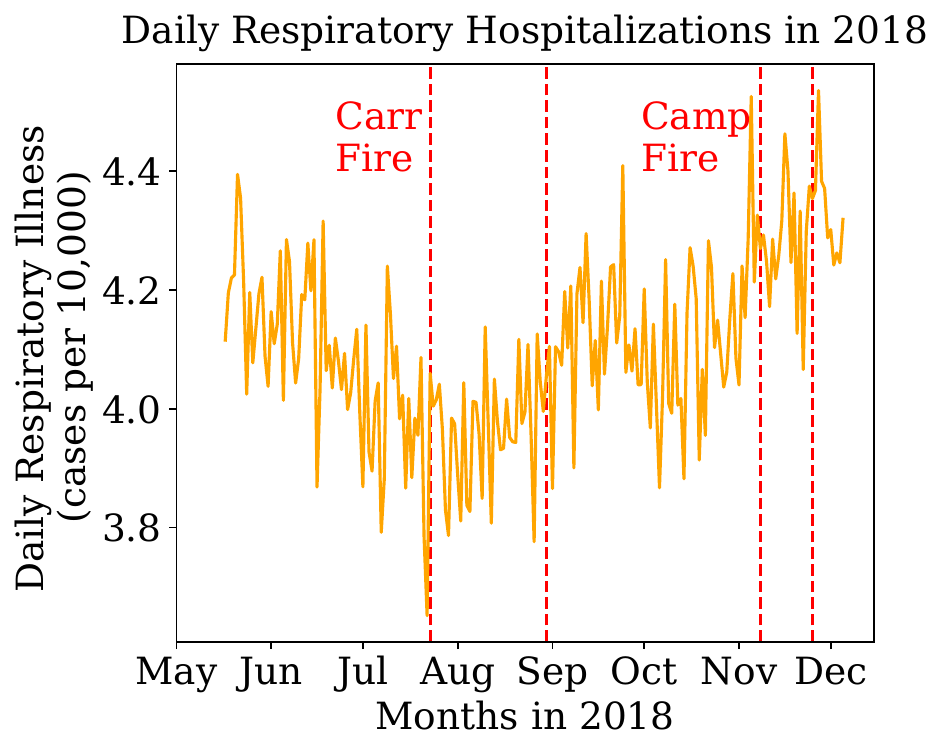}\hfill
    % Center: Weekly respiratory illness incidence
    \includegraphics[width=0.345\textwidth]{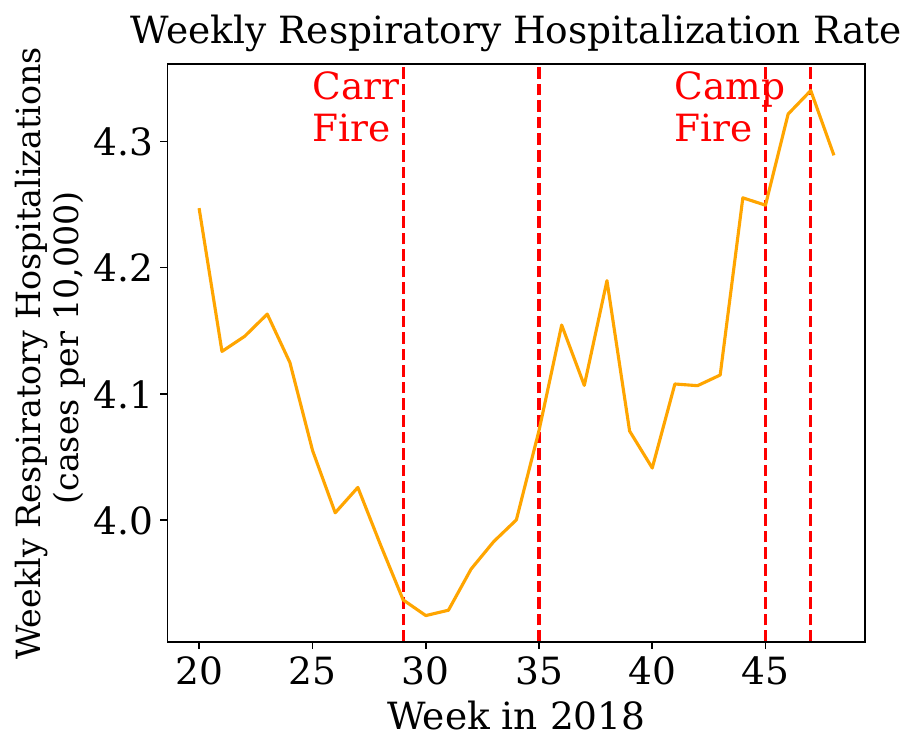}\hfill
    % Right: Average daily PM2.5
    \includegraphics[width=0.30\textwidth]{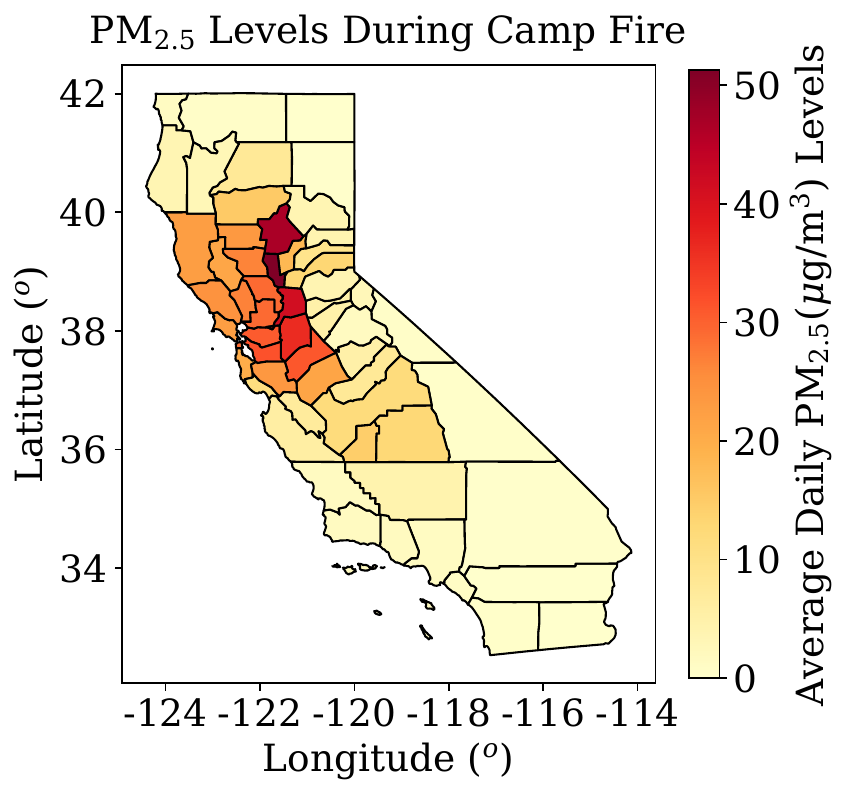}
    \vspace{-0.5em}
    \caption{
    \textbf{(Left)}~Daily respiratory illness incidence (cases per 10{,}000).
    \textbf{(Center)}~Weekly aggregated incidence.
    \textbf{(Right)}~Average daily PM\textsubscript{2.5} during the Camp Fire.
    }
    \label{fig:app_wildfire}
    \vspace{-1em}
\end{figure*}

\begin{figure*}[t]
    \centering
    % Left: County-level PM2.5, Nov 18
    % Right: Interpolated grid PM2.5, Nov 18
    \includegraphics[width=0.70\textwidth]{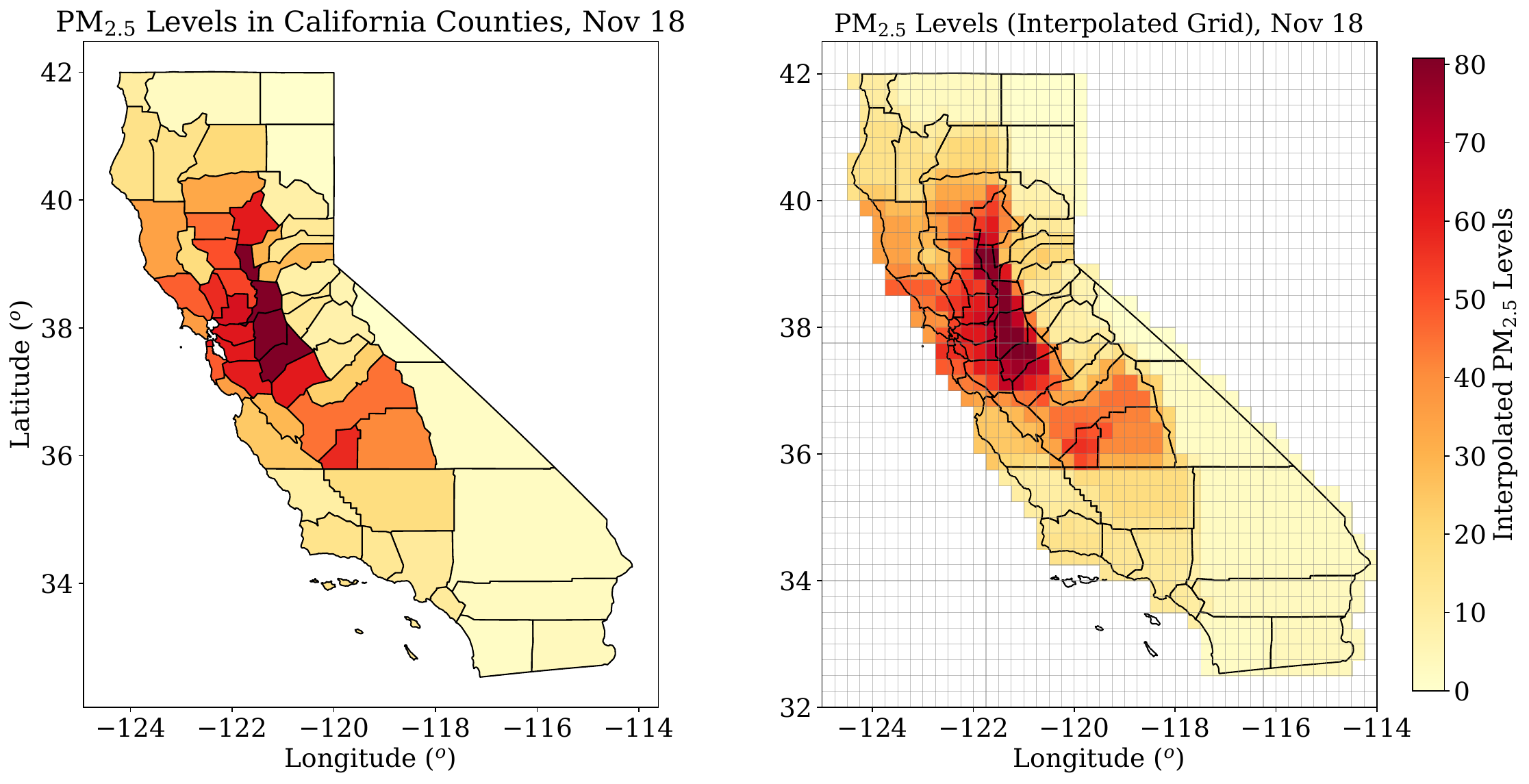}
    \vspace{-0.75em}
    \caption{%
    An example of county-level (\emph{left}) vs.\ grid-interpolated (\emph{right}) PM\textsubscript{2.5} levels 
    on November~18 (during the Camp~Fire). The grid interpolation 
    produces a $40\times44$ lattice of area-weighted estimates
    aligned with our spatiotemporal framework.
    }
    \label{fig:pm25_interpolation}
    \vspace{-1em}
\end{figure*}

\textbf{Data Preprocessing and Interpolation}  We analyze daily, county-level data from \citet{letellier2025applying} that include PM$_{2.5}$ (particulate matter~$<\!2.5\,\mu m$), hospitalization counts for respiratory and cardiovascular conditions, and weather variables (temperature, precipitation, humidity, radiation, wind), plus population estimates from the California Department of Finance.  Our study period spans weeks~20--48 (May~18--December~2, 2018), covering both the Carr and Camp fires. 
As illustrated in Figure~\ref{fig:app_wildfire}, daily and weekly aggregated respiratory illness rates rise around these events, while PM\textsubscript{2.5} levels also surge during the Camp Fire.

To align with our spatiotemporal framework, we use \texttt{geopandas}~\citep{kelsey_jordahl_2020_3946761} to interpolate county-level covariates, PM$_{2.5}$, and hospitalizations onto a latitude--longitude grid from 32$^\circ$N to 42$^\circ$N latitude and -125$^\circ$ to -114$^\circ$ longitude, at a resolution of $0.25^\circ$. Each grid cell’s values are an area-weighted average of the counties it intersects, yielding a $40\times44$ spatial lattice. We mask out non-California cells by setting them to zero, thus obtaining a consistent dataset for further analysis. As an example, \cref{fig:pm25_interpolation} illustrates how the raw county-level data compare to the interpolated grid for PM\textsubscript{2.5} on November~18.

\textbf{Model Training and Validation} We train GST-UNet with prediction horizon $\tau=10$ days. The loss function is MSE with two key modifications: (1) we mask grid cells outside California's boundaries to exclude them from loss computation, and (2) we apply cell-specific weights proportional to the number of cells per county to prevent bias towards geographically larger counties. For validation and hyperparameter tuning, we use data from the first 50 days of the wildfire season.
The GST-UNet hyperparameters are: batch size $=4$, learning rate $=5\times 10^{-4}$, scheduler patience $=5$, early stopping patience $=10$, curriculum period $=5$, curriculum learning rate $=5\times 10^{-4}$, UNet output dimension $d_h=16$, G-head hidden layer size $=8$, and G-head layers $=1$.
Using this configuration, we generate counterfactual predictions for the Camp Fire peak period (November~8--17) by iteratively applying the trained model with increasing history lengths. We note that counties with populations below $20{,}000$--$30{,}000$ can yield unreliable incidence rate estimates, given baseline daily rates of approximately 4 cases per 10{,}000 individuals. In \cref{fig:camp_fire_maps}, we denote these high-uncertainty counties with hatched markings.

\textbf{Model Training and Validation} \textbf{Model Training and Validation} We train GST-UNet, UNet+, STCINet, and IPWUNet with a prediction horizon of $\tau=10$ days using a shared set of hyperparameters: batch size $=4$, learning rate $=5\times 10^{-4}$, scheduler patience $=5$, early stopping patience $=10$, and curriculum period $=5$ (with curriculum learning rate $=5\times 10^{-4}$). For GST-UNet, we additionally set the U-Net output dimension to $d_h=16$, the G-head hidden layer size to $8$, and the number of G-head layers to $1$. The loss function is mean squared error (MSE) with two adjustments: (1) we mask grid cells outside California to exclude them from the loss, and (2) we apply cell-specific weights proportional to the number of grid cells per county to avoid bias toward geographically larger counties. Hyperparameter tuning is performed using the first 50 days of the wildfire season. We then generate counterfactual predictions for the Camp Fire peak period (November~8--17) by iteratively applying each model with increasing history lengths.

\begin{table}[t]
\centering
\caption{Estimated county-level increases in respiratory ED visits attributable to the wildfire event, with 95\% bootstrap confidence intervals. Population is reported in units of 10,000. Counties marked with * have smaller populations, which may lead to greater uncertainty.}
\small
\begin{tabular}{|l|r|r|r|r|r|}
\toprule
County & Mean & 2.5\% & 97.5\% & Population ($\times 10^4$) & Interval Width / Population \\
\midrule
Tehama     & 37   & -126 & 158   & 6.4   & 44.4  \\
Butte      & 168  & 30   & 325   & 23.0  & 12.8  \\
Glenn*     & -52  & -262 & 39    & 2.8   & 107.6 \\
Colusa*    & 13   & -158 & 107   & 2.1   & 124.0 \\
Sutter     & -18  & -170 & 70    & 9.6   & 24.9  \\
Napa       & 81   & -41  & 192   & 13.9  & 16.8  \\
Lake       & 103  & -66  & 203   & 6.4   & 41.8  \\
Solano     & 38   & -79  & 173   & 44.6  & 5.6   \\
Sacramento & 202  & -107 & 484   & 153.9 & 3.8   \\
\bottomrule
\end{tabular}
\label{tab:county-estimates-ci}
\end{table}

\textbf{Bootstrap Confidence Intervals} We compute 95\% bootstrap confidence intervals for all models using $n=40$ bootstrap samples, balancing statistical rigor with computational load. Counties with populations below $20{,}000$--$30{,}000$ tend to yield unstable incidence rate estimates, driven by low baseline daily counts (approximately 4 cases per 10{,}000), and are excluded from the analysis. These counties are indicated with hatching in Figure~\ref{fig:camp_fire_maps}, a choice further supported by the bootstrap results. In \cref{tab:county-estimates-ci}, we report bootstrap intervals for the counties closest to the Camp Fire. Glenn and Colusa exhibit disproportionately wide intervals--reflecting the uncertainty introduced by their small population sizes--and this further justifies their exclusion from the final analysis.

\section{Limitations and Broader Impacts}\label{app:limitations}

\paragraph{Limitations} While GST-UNet demonstrates strong empirical performance and theoretical grounding, several limitations should be acknowledged. First, our method relies on standard causal identification assumptions, including no unobserved confounding (\cref{assump:standard}), which is inherently untestable and may not hold in all real-world settings. Second, our framework assumes the existence of a time-invariant representation of the spatiotemporal process (\cref{assump:embedding})--a useful but idealized condition that may be violated in domains with highly non-stationary or regime-shifting dynamics. Finally, GST-UNet is designed for gridded spatiotemporal data and assumes a regular spatial lattice; while this is common in environmental and health applications, adapting the framework to irregular spatial structures (e.g., graphs or administrative boundaries) is an important direction for future work. 

\paragraph{Broader Impacts} This work advances machine learning by introducing a spatiotemporal causal inference framework for estimating treatment effects in complex real-world settings. The GST-UNet has broad applications in public health, environmental science, and social policy, where understanding interventions supports evidence-based decisions. For example, it can inform pollution control, wildfire response, or health resource allocation. However, like all observational methods, GST-UNet depends on the quality and completeness of the data, as well as the assumptions stated in this work. We caution against uncritical use in high-stakes settings, as violations of model assumptions or data biases can lead to misleading conclusions. We encourage responsible deployment--especially in contexts affecting vulnerable populations--and recommend pairing our method with domain expertise, sensitivity checks, and uncertainty quantification.

%%%%%%%%%%%%%%%%%%%%%%%%%%%%%%%%%%%%%%%%%%%%%%%%%%%%%%%%%%%%

\newpage

\section*{NeurIPS Paper Checklist}

\begin{enumerate}

\item {\bf Claims}
    \item[] Question: Do the main claims made in the abstract and introduction accurately reflect the paper's contributions and scope?
    \item[] Answer: \answerYes{} % Replace by \answerYes{}, \answerNo{}, or \answerNA{}.
    \item[] Justification: The abstract and introduction state that the paper introduces GST-UNet, a neural framework for estimating causal effects in spatiotemporal settings with time-varying confounding. They accurately summarize the paper's three core contributions: the development of the framework with theoretical guarantees, empirical validation in synthetic settings, and a real-world application to wildfire smoke exposure. The claims align with both the theoretical results and empirical findings presented in the main text.
    \item[] Guidelines:
    \begin{itemize}
        \item The answer NA means that the abstract and introduction do not include the claims made in the paper.
        \item The abstract and/or introduction should clearly state the claims made, including the contributions made in the paper and important assumptions and limitations. A No or NA answer to this question will not be perceived well by the reviewers. 
        \item The claims made should match theoretical and experimental results, and reflect how much the results can be expected to generalize to other settings. 
        \item It is fine to include aspirational goals as motivation as long as it is clear that these goals are not attained by the paper. 
    \end{itemize}

\item {\bf Limitations}
    \item[] Question: Does the paper discuss the limitations of the work performed by the authors?
    \item[] Answer: \answerYes{} % Replace by \answerYes{}, \answerNo{}, or \answerNA{}.
    \item[] Justification: We present the limitations of our work in \cref{app:limitations}.
    \item[] Guidelines:
    \begin{itemize}
        \item The answer NA means that the paper has no limitation while the answer No means that the paper has limitations, but those are not discussed in the paper. 
        \item The authors are encouraged to create a separate "Limitations" section in their paper.
        \item The paper should point out any strong assumptions and how robust the results are to violations of these assumptions (e.g., independence assumptions, noiseless settings, model well-specification, asymptotic approximations only holding locally). The authors should reflect on how these assumptions might be violated in practice and what the implications would be.
        \item The authors should reflect on the scope of the claims made, e.g., if the approach was only tested on a few datasets or with a few runs. In general, empirical results often depend on implicit assumptions, which should be articulated.
        \item The authors should reflect on the factors that influence the performance of the approach. For example, a facial recognition algorithm may perform poorly when image resolution is low or images are taken in low lighting. Or a speech-to-text system might not be used reliably to provide closed captions for online lectures because it fails to handle technical jargon.
        \item The authors should discuss the computational efficiency of the proposed algorithms and how they scale with dataset size.
        \item If applicable, the authors should discuss possible limitations of their approach to address problems of privacy and fairness.
        \item While the authors might fear that complete honesty about limitations might be used by reviewers as grounds for rejection, a worse outcome might be that reviewers discover limitations that aren't acknowledged in the paper. The authors should use their best judgment and recognize that individual actions in favor of transparency play an important role in developing norms that preserve the integrity of the community. Reviewers will be specifically instructed to not penalize honesty concerning limitations.
    \end{itemize}

\item {\bf Theory assumptions and proofs}
    \item[] Question: For each theoretical result, does the paper provide the full set of assumptions and a complete (and correct) proof?
    \item[] Answer: \answerYes{} % Replace by \answerYes{}, \answerNo{}, or \answerNA{}.
    \item[] Justification: We present two theorems, \cref{thm:identification} and \cref{thm:appendix-consistency} which cover the identification and consistency guarantees of our estimator. The (complete and correct) proofs are included in \cref{app:identification-proof} and \cref{app:consistency-proof}, respectively. The assumptions are incorporated in \cref{assump:standard}, \cref{assump:embedding}, as well as the text of the theorems.
    \item[] Guidelines:
    \begin{itemize}
        \item The answer NA means that the paper does not include theoretical results. 
        \item All the theorems, formulas, and proofs in the paper should be numbered and cross-referenced.
        \item All assumptions should be clearly stated or referenced in the statement of any theorems.
        \item The proofs can either appear in the main paper or the supplemental material, but if they appear in the supplemental material, the authors are encouraged to provide a short proof sketch to provide intuition. 
        \item Inversely, any informal proof provided in the core of the paper should be complemented by formal proofs provided in appendix or supplemental material.
        \item Theorems and Lemmas that the proof relies upon should be properly referenced. 
    \end{itemize}

    \item {\bf Experimental result reproducibility}
    \item[] Question: Does the paper fully disclose all the information needed to reproduce the main experimental results of the paper to the extent that it affects the main claims and/or conclusions of the paper (regardless of whether the code and data are provided or not)?
    \item[] Answer: \answerYes{} % Replace by \answerYes{}, \answerNo{}, or \answerNA{}.
    \item[] Justification: \cref{sec:experiments} and \cref{sec:app-experiments} provide the information necessary (including data generation processes, model choices, training and validation procedures, hyperparameter choices, etc.) to reproduce the main experimental results.
    \item[] Guidelines:
    \begin{itemize}
        \item The answer NA means that the paper does not include experiments.
        \item If the paper includes experiments, a No answer to this question will not be perceived well by the reviewers: Making the paper reproducible is important, regardless of whether the code and data are provided or not.
        \item If the contribution is a dataset and/or model, the authors should describe the steps taken to make their results reproducible or verifiable. 
        \item Depending on the contribution, reproducibility can be accomplished in various ways. For example, if the contribution is a novel architecture, describing the architecture fully might suffice, or if the contribution is a specific model and empirical evaluation, it may be necessary to either make it possible for others to replicate the model with the same dataset, or provide access to the model. In general. releasing code and data is often one good way to accomplish this, but reproducibility can also be provided via detailed instructions for how to replicate the results, access to a hosted model (e.g., in the case of a large language model), releasing of a model checkpoint, or other means that are appropriate to the research performed.
        \item While NeurIPS does not require releasing code, the conference does require all submissions to provide some reasonable avenue for reproducibility, which may depend on the nature of the contribution. For example
        \begin{enumerate}
            \item If the contribution is primarily a new algorithm, the paper should make it clear how to reproduce that algorithm.
            \item If the contribution is primarily a new model architecture, the paper should describe the architecture clearly and fully.
            \item If the contribution is a new model (e.g., a large language model), then there should either be a way to access this model for reproducing the results or a way to reproduce the model (e.g., with an open-source dataset or instructions for how to construct the dataset).
            \item We recognize that reproducibility may be tricky in some cases, in which case authors are welcome to describe the particular way they provide for reproducibility. In the case of closed-source models, it may be that access to the model is limited in some way (e.g., to registered users), but it should be possible for other researchers to have some path to reproducing or verifying the results.
        \end{enumerate}
    \end{itemize}

\item {\bf Open access to data and code}
    \item[] Question: Does the paper provide open access to the data and code, with sufficient instructions to faithfully reproduce the main experimental results, as described in supplemental material?
    \item[] Answer: \answerYes{} % Replace by \answerYes{}, \answerNo{}, or \answerNA{}.
    \item[] Justification: We provide the replication data and code at \url{https://github.com/moprescu/GSTUNet}, along with instructions for reproducibility (see \texttt{README.md} document).
    \item[] Guidelines:
    \begin{itemize}
        \item The answer NA means that paper does not include experiments requiring code.
        \item Please see the NeurIPS code and data submission guidelines (\url{https://nips.cc/public/guides/CodeSubmissionPolicy}) for more details.
        \item While we encourage the release of code and data, we understand that this might not be possible, so “No” is an acceptable answer. Papers cannot be rejected simply for not including code, unless this is central to the contribution (e.g., for a new open-source benchmark).
        \item The instructions should contain the exact command and environment needed to run to reproduce the results. See the NeurIPS code and data submission guidelines (\url{https://nips.cc/public/guides/CodeSubmissionPolicy}) for more details.
        \item The authors should provide instructions on data access and preparation, including how to access the raw data, preprocessed data, intermediate data, and generated data, etc.
        \item The authors should provide scripts to reproduce all experimental results for the new proposed method and baselines. If only a subset of experiments are reproducible, they should state which ones are omitted from the script and why.
        \item At submission time, to preserve anonymity, the authors should release anonymized versions (if applicable).
        \item Providing as much information as possible in supplemental material (appended to the paper) is recommended, but including URLs to data and code is permitted.
    \end{itemize}

\item {\bf Experimental setting/details}
    \item[] Question: Does the paper specify all the training and test details (e.g., data splits, hyperparameters, how they were chosen, type of optimizer, etc.) necessary to understand the results?
    \item[] Answer: \answerYes{} % Replace by \answerYes{}, \answerNo{}, or \answerNA{}.
    \item[] Justification: \cref{sec:experiments} and \cref{sec:app-experiments}.
    \item[] Guidelines:
    \begin{itemize}
        \item The answer NA means that the paper does not include experiments.
        \item The experimental setting should be presented in the core of the paper to a level of detail that is necessary to appreciate the results and make sense of them.
        \item The full details can be provided either with the code, in appendix, or as supplemental material.
    \end{itemize}

\item {\bf Experiment statistical significance}
    \item[] Question: Does the paper report error bars suitably and correctly defined or other appropriate information about the statistical significance of the experiments?
    \item[] Answer: \answerYes{} % Replace by \answerYes{}, \answerNo{}, or \answerNA{}.
    \item[] Justification: We report the standard deviation for the synthetic experiments in \cref{tab:sim-results} and boostrap CIs for the real-world case study in \cref{sec:exp-wildfire}
    \item[] Guidelines:
    \begin{itemize}
        \item The answer NA means that the paper does not include experiments.
        \item The authors should answer "Yes" if the results are accompanied by error bars, confidence intervals, or statistical significance tests, at least for the experiments that support the main claims of the paper.
        \item The factors of variability that the error bars are capturing should be clearly stated (for example, train/test split, initialization, random drawing of some parameter, or overall run with given experimental conditions).
        \item The method for calculating the error bars should be explained (closed form formula, call to a library function, bootstrap, etc.)
        \item The assumptions made should be given (e.g., Normally distributed errors).
        \item It should be clear whether the error bar is the standard deviation or the standard error of the mean.
        \item It is OK to report 1-sigma error bars, but one should state it. The authors should preferably report a 2-sigma error bar than state that they have a 96\% CI, if the hypothesis of Normality of errors is not verified.
        \item For asymmetric distributions, the authors should be careful not to show in tables or figures symmetric error bars that would yield results that are out of range (e.g. negative error rates).
        \item If error bars are reported in tables or plots, The authors should explain in the text how they were calculated and reference the corresponding figures or tables in the text.
    \end{itemize}

\item {\bf Experiments compute resources}
    \item[] Question: For each experiment, does the paper provide sufficient information on the computer resources (type of compute workers, memory, time of execution) needed to reproduce the experiments?
    \item[] Answer: \answerYes{} % Replace by \answerYes{}, \answerNo{}, or \answerNA{}.
    \item[] Justification: See \cref{sec:app-experiments}.
    \item[] Guidelines:
    \begin{itemize}
        \item The answer NA means that the paper does not include experiments.
        \item The paper should indicate the type of compute workers CPU or GPU, internal cluster, or cloud provider, including relevant memory and storage.
        \item The paper should provide the amount of compute required for each of the individual experimental runs as well as estimate the total compute. 
        \item The paper should disclose whether the full research project required more compute than the experiments reported in the paper (e.g., preliminary or failed experiments that didn't make it into the paper). 
    \end{itemize}
    
\item {\bf Code of ethics}
    \item[] Question: Does the research conducted in the paper conform, in every respect, with the NeurIPS Code of Ethics \url{https://neurips.cc/public/EthicsGuidelines}?
    \item[] Answer: \answerYes{} % Replace by \answerYes{}, \answerNo{}, or \answerNA{}.
    \item[] Justification: We have reviewed the ethics guidelines at \url{https://neurips.cc/public/EthicsGuidelines} and confirm that our work adheres to them.
    \item[] Guidelines:
    \begin{itemize}
        \item The answer NA means that the authors have not reviewed the NeurIPS Code of Ethics.
        \item If the authors answer No, they should explain the special circumstances that require a deviation from the Code of Ethics.
        \item The authors should make sure to preserve anonymity (e.g., if there is a special consideration due to laws or regulations in their jurisdiction).
    \end{itemize}

\item {\bf Broader impacts}
    \item[] Question: Does the paper discuss both potential positive societal impacts and negative societal impacts of the work performed?
    \item[] Answer: \answerYes{} % Replace by \answerYes{}, \answerNo{}, or \answerNA{}.
    \item[] Justification: See \cref{app:limitations}.
    \item[] Guidelines:
    \begin{itemize}
        \item The answer NA means that there is no societal impact of the work performed.
        \item If the authors answer NA or No, they should explain why their work has no societal impact or why the paper does not address societal impact.
        \item Examples of negative societal impacts include potential malicious or unintended uses (e.g., disinformation, generating fake profiles, surveillance), fairness considerations (e.g., deployment of technologies that could make decisions that unfairly impact specific groups), privacy considerations, and security considerations.
        \item The conference expects that many papers will be foundational research and not tied to particular applications, let alone deployments. However, if there is a direct path to any negative applications, the authors should point it out. For example, it is legitimate to point out that an improvement in the quality of generative models could be used to generate deepfakes for disinformation. On the other hand, it is not needed to point out that a generic algorithm for optimizing neural networks could enable people to train models that generate Deepfakes faster.
        \item The authors should consider possible harms that could arise when the technology is being used as intended and functioning correctly, harms that could arise when the technology is being used as intended but gives incorrect results, and harms following from (intentional or unintentional) misuse of the technology.
        \item If there are negative societal impacts, the authors could also discuss possible mitigation strategies (e.g., gated release of models, providing defenses in addition to attacks, mechanisms for monitoring misuse, mechanisms to monitor how a system learns from feedback over time, improving the efficiency and accessibility of ML).
    \end{itemize}
    
\item {\bf Safeguards}
    \item[] Question: Does the paper describe safeguards that have been put in place for responsible release of data or models that have a high risk for misuse (e.g., pretrained language models, image generators, or scraped datasets)?
    \item[] Answer: \answerNA{} % Replace by \answerYes{}, \answerNo{}, or \answerNA{}.
    \item[] Justification: The models and datasets used in this work do not pose significant risks for misuse. GST-UNet is a causal inference framework for scientific and policy applications using structured spatiotemporal data.
    \item[] Guidelines:
    \begin{itemize}
        \item The answer NA means that the paper poses no such risks.
        \item Released models that have a high risk for misuse or dual-use should be released with necessary safeguards to allow for controlled use of the model, for example by requiring that users adhere to usage guidelines or restrictions to access the model or implementing safety filters. 
        \item Datasets that have been scraped from the Internet could pose safety risks. The authors should describe how they avoided releasing unsafe images.
        \item We recognize that providing effective safeguards is challenging, and many papers do not require this, but we encourage authors to take this into account and make a best faith effort.
    \end{itemize}

\item {\bf Licenses for existing assets}
    \item[] Question: Are the creators or original owners of assets (e.g., code, data, models), used in the paper, properly credited and are the license and terms of use explicitly mentioned and properly respected?
    \item[] Answer: \answerYes{} % Replace by \answerYes{}, \answerNo{}, or \answerNA{}.
    \item[] Justification: All datasets and code used in this work are publicly available and appropriately cited (e.g., \citet{letellier2025applying} for wildfire data). License and usage terms are respected as per the original sources.
    \item[] Guidelines:
    \begin{itemize}
        \item The answer NA means that the paper does not use existing assets.
        \item The authors should cite the original paper that produced the code package or dataset.
        \item The authors should state which version of the asset is used and, if possible, include a URL.
        \item The name of the license (e.g., CC-BY 4.0) should be included for each asset.
        \item For scraped data from a particular source (e.g., website), the copyright and terms of service of that source should be provided.
        \item If assets are released, the license, copyright information, and terms of use in the package should be provided. For popular datasets, \url{paperswithcode.com/datasets} has curated licenses for some datasets. Their licensing guide can help determine the license of a dataset.
        \item For existing datasets that are re-packaged, both the original license and the license of the derived asset (if it has changed) should be provided.
        \item If this information is not available online, the authors are encouraged to reach out to the asset's creators.
    \end{itemize}

\item {\bf New assets}
    \item[] Question: Are new assets introduced in the paper well documented and is the documentation provided alongside the assets?
    \item[] Answer: \answerYes{} % Replace by \answerYes{}, \answerNo{}, or \answerNA{}.
    \item[] Justification: We release code and simulation scripts to reproduce all experiments, along with documentation and usage instructions (see \cref{sec:app-experiments}). All assets are anonymized for review and available at the provided URL.
    \item[] Guidelines:
    \begin{itemize}
        \item The answer NA means that the paper does not release new assets.
        \item Researchers should communicate the details of the dataset/code/model as part of their submissions via structured templates. This includes details about training, license, limitations, etc. 
        \item The paper should discuss whether and how consent was obtained from people whose asset is used.
        \item At submission time, remember to anonymize your assets (if applicable). You can either create an anonymized URL or include an anonymized zip file.
    \end{itemize}

\item {\bf Crowdsourcing and research with human subjects}
    \item[] Question: For crowdsourcing experiments and research with human subjects, does the paper include the full text of instructions given to participants and screenshots, if applicable, as well as details about compensation (if any)? 
    \item[] Answer: \answerNA{} % Replace by \answerYes{}, \answerNo{}, or \answerNA{}.
    \item[] Justification: This paper does not involve crowdsourcing or research with human subjects.
    \item[] Guidelines:
    \begin{itemize}
        \item The answer NA means that the paper does not involve crowdsourcing nor research with human subjects.
        \item Including this information in the supplemental material is fine, but if the main contribution of the paper involves human subjects, then as much detail as possible should be included in the main paper. 
        \item According to the NeurIPS Code of Ethics, workers involved in data collection, curation, or other labor should be paid at least the minimum wage in the country of the data collector. 
    \end{itemize}

\item {\bf Institutional review board (IRB) approvals or equivalent for research with human subjects}
    \item[] Question: Does the paper describe potential risks incurred by study participants, whether such risks were disclosed to the subjects, and whether Institutional Review Board (IRB) approvals (or an equivalent approval/review based on the requirements of your country or institution) were obtained?
    \item[] Answer: \answerNA{} % Replace by \answerYes{}, \answerNo{}, or \answerNA{}.
    \item[] Justification: This paper does not involve crowdsourcing or research with human subjects.
    \item[] Guidelines:
    \begin{itemize}
        \item The answer NA means that the paper does not involve crowdsourcing nor research with human subjects.
        \item Depending on the country in which research is conducted, IRB approval (or equivalent) may be required for any human subjects research. If you obtained IRB approval, you should clearly state this in the paper. 
        \item We recognize that the procedures for this may vary significantly between institutions and locations, and we expect authors to adhere to the NeurIPS Code of Ethics and the guidelines for their institution. 
        \item For initial submissions, do not include any information that would break anonymity (if applicable), such as the institution conducting the review.
    \end{itemize}

\item {\bf Declaration of LLM usage}
    \item[] Question: Does the paper describe the usage of LLMs if it is an important, original, or non-standard component of the core methods in this research? Note that if the LLM is used only for writing, editing, or formatting purposes and does not impact the core methodology, scientific rigorousness, or originality of the research, declaration is not required.
    %this research? 
    \item[] Answer: \answerNA{} % Replace by \answerYes{}, \answerNo{}, or \answerNA{}.
    \item[] Justification: LLMs were used solely for writing assistance and code debugging; they were not involved in the development or implementation of the core methodology.
    \item[] Guidelines:
    \begin{itemize}
        \item The answer NA means that the core method development in this research does not involve LLMs as any important, original, or non-standard components.
        \item Please refer to our LLM policy (\url{https://neurips.cc/Conferences/2025/LLM}) for what should or should not be described.
    \end{itemize}

\end{enumerate}

\end{document}